\documentclass[12pt]{article}

\usepackage{amsmath, amssymb, amstext, amsthm,epsfig}

\usepackage[usenames]{color}
\usepackage{colortbl}

\newcommand{\ver}[1]{{\color{black}#1}}
\newcommand{\mil}[1]{{\color{black}#1}}

\newcommand{\poly}{\text{poly}}

\newcommand{\cnd}{\mskip 1mu|\mskip 1mu}
\newcommand{\pair}[1]{\langle #1\rangle}

\renewcommand{\phi}{\varphi}
\renewcommand{\epsilon}{\varepsilon}
\renewcommand{\ge}{\geqslant}
\renewcommand{\le}{\leqslant}

\newtheorem{theorem}{Theorem}
\newtheorem{lemma}[theorem]{Lemma}
\newtheorem{corollary}[theorem]{Corollary}
\newtheorem{example}[theorem]{Example}

\theoremstyle{remark}
\newtheorem{remark}{Remark}
\newtheorem{definition}{Definition}

\DeclareMathOperator{\m}{{\mathbf{m}\,}}

\begin{document}

\pagestyle{plain}

%%%%%%%%%%%%%%%%%%%%%%%%%%%%%%%%%%%%%%%%%%%%%%%%%%%%%%%%%%%%

\title{Algorithmic statistics, prediction and machine learning}
%%%%%%%%%%%%%%%%%%%%%%%%%%%%%%%%%%%%%%%%%%%%%%%%%%%%%%%%%%%%
\author{Alexey Milovanov\\Moscow State University\\
{\tt almas239@gmail.com}}

\maketitle

%%%%%%%%%%%%%%%%%%%%%%%%%%%%%%%%%%%%%%%%%%%%%%%%
\begin{abstract}
Algorithmic statistics considers the following problem: given a binary string $x$ (e.g., some experimental data), find a ``good'' explanation of this data. It uses algorithmic information theory to define formally what is a good explanation. In this paper we extend this framework in two directions.

First, the explanations are not only interesting in themselves but also used for prediction: we want to know what kind of data we may reasonably expect in similar situations (repeating the same experiment). We show that some kind of hierarchy can be constructed both in terms of algorithmic statistics and using the notion of a priori probability, and these two approaches turn out to be equivalent (Theorem~\ref{mt}).

Second, a more realistic approach that goes back to machine learning theory, assumes that we have not a single data string $x$ but some set of ``positive examples'' $x_1,\ldots,x_l$ that all belong to some unknown set $A$, a property that we want to learn. We want this set $A$ to contain all positive examples and to be as small and simple as possible. We show how algorithmic statistic can be extended to cover this situation (Theorem~\ref{mlss}).

\end{abstract}
%%%%%%%%%%%%%%%%%%%%%%%%%%%%%%%%%%%%%%%%%%%%%%%%

\textbf{Keywords:} algorithmic information theory, minimal description length, prediction, Kolmogorov complexity, learning.

\section{Introduction and notation}

Let $x$ be a binary string, and let $A$ be a finite set of binary strings containing~$x$. Considering $A$ as an ``explanation'' (statistical model) for $x$, we want $A$ to be as simple and small as possible (the smaller $A$ is, the more specific the explanation is). This approach can be made formal in the framework of algorithmic information theory, where the notion of algorithmic (Kolmogorov) complexity of a finite object (a string or a set encoded as a binary string in a natural way) is defined.

The definition and basic properties of Kolmogorov
complexity can be found in the textbooks 
 \cite{LiVit}, \cite{SUV}, for a short survey see \cite{Shen}. 
Informally Kolmogorov complexity of a string $x$ is defined as the minimal length of a program that produces $x$. 
This definition depends on the programming language, but there are optimal languages that make the complexity minimal up to a constant; we fix one of them and denote the complexity of $x$ by $C(x)$.

We also use another basic notion of the algorithmic information theory, the \emph{discrete a priory probability}. Consider a probabilistic machine $A$ without input that outputs some binary string and stops. It defines a probability distribution on binary strings: $m_A(x)$ is the probability to get $x$ as the output of $A$. (The sum of $m_A(x)$ over all $x$ can be less than $1$ since the machine can also hang.) The functions $m_A$ can be also characterized as lower semicomputable semimeasures (non-negative real-valued functions $m(\cdot)$ on binary strings such that the set of pairs $(r,x)$ where $r$ is a rational number, $x$ is a binary string and $r<m(x)$, is computably enumerable, and $\sum_x m(x)\le 1$). There exists a universal machine $U$ such that $m_U$ is maximal (up to $O(1)$-factor) among all $m_A$. We fix some $U$ with this property and call $m_U(x)$ the \emph{discrete a priori probability of $x$}, denoted as $\m(x)$. The function $\m$ is closely related to Kolmogorov complexity. Namely, the value $-\log_2 \m(x)$ is equal to $C(x)$ with $O(\log C(x))$-precision.

Now we can define two parameters that measure the quality of a finite set $A$ as a model for its element $x$: the complexity $C(A)$ of $A$ and the binary logarithm $\log|A|$ of its size. The first parameter measures how simple is our explanation; the second one measures how specific it is. We use binary logarithms to get both parameters in the same scale: to specify an element of a set of size $N$ we need $\log N$ bits of information.

There is a trade-off between two parameters. The singleton $A=\{x\}$ is a very specific description, but its complexity may be high. On the other hand, for a $n$-bit string $x$ the set $A=\mathbb{B}^n$ of all $n$-bit strings is simple, but it is large. To analyze this trade-off, following \cite{Kolmogorov,Koppel}, let us note that every set $A$ containing $x$ leads to a \emph{two-part description of $x$}: first we specify $A$ using $C(A)$ bits, and then we specify $x$ by its ordinal number in $A$, using $\log|A|$ bits. In total we need $C(A)+\log|A|$ bits to specify $x$ (plus logarithmic number of bits to separate two parts of the description). This gives the inequality
$$
C(x)\le C(A)+\log |A| + O(\log C(A))
$$
(the length of the optimal description, $C(x)$, does not exceed the length of any two-part description). The difference 
$$
\delta(x,A)=C(A)+ \log |A|-C(x)
$$
is called \emph{optimality deficiency} of $A$ (as a model for $x$). As usual in algorithmic statistic, all our statements are made with logarithmic precision (with error tolerance $O(\log n)$ for $n$-bit strings), so we ignore the logarithmic terms and say that $\delta(x,A)$ is positive and measures the overhead caused by using two-part description based on $A$ instead of the optimal description for $x$. 

Note that this overhead $\delta(x,A)$ is zero for $A=\{x\}$, so the question is whether we can obtain $A$ that is simpler than $x$ but maintains $\delta(x,A)$ reasonably small. This trade-off is reflected by a curve called sometimes that the \emph{profile} of $x$; this profile can be defined also in terms of randomness deficiency (the notion of $(\alpha,\beta)$-stochasticity introduced by Kolmogorov, see \cite{SUV}, \cite{VerShen}\ver{)},  and in terms of time-bounded Kolmogorov complexity (the notion of depth, see \cite{VerShen}).

In our paper we apply these notions to an analysis of the prediction and learning. In Section~\ref{sec:prediction} we consider, for a given string $x$, all ``good'' explanations and consider their union. Elements of this union are strings that can be reasonably expected when the experiment that produced $x$ is repeated. We show that this union has another equivalent definition in terms of a priori probability (Theorem~\ref{mt}).

In Subsection~\ref{sl} we consider a situation where we start with several data strings $x_1,\ldots,x_l$ obtained in several independent experiments of the same type. We show that all the basic notions of algorithmic statistics can be extended (with appropriate changes) to this framework, as well as Theorem \ref{mt}.

\section{Prediction Hierarchy}\label{sec:prediction}

\subsection{Algorithmic prediction}

Assume that we have some experimental data represented as a binary string~$x$. We look for a good statistical model for $x$ and find some set $A$ that has small optimality deficiency $\delta(x,A)$. If we believe in this model, we expect only elements from $A$ as outcomes when the same experiment is repeated. The problem, however, is that many different models with small optimality deficiency may exist for a given $x$, and they may contain different elements. If we want to cover all the possibilities, we need to consider the union of all these sets, so we get the following definition. In the following definition we assume that $x$ is a binary string of length $n$, and all the sets $A$ also contain only strings of length $n$.

\begin{definition}
Let $x\in\mathbb{B}^n$ be a binary string and let $d$ be some integer. The union of all finite sets of strings $A\subset \mathbb{B}^n$ such that $x\in A$ and $\delta(x,A)\le d$ is called \emph{algorithmic prediction $d$-neighborhood of $x$}. 
\end{definition}

Obviously $d$-neighborhood increases as $d$ increases. It becomes trivial (contains all $n$-bit strings) when $d=n$ (then $\mathbb{B}^n$ is one of the sets $A$ in the union).

\begin{example}
If $x = 0 \dots 0$ (the strings consisting of $n$ zeros), then $x'$ belongs to 
$d$-neighborhood of $x$ iff $C(x') \lesssim d$
\end{example}

\begin{example}
If $x$ is a random string of length $n$ (i. e. $C(x) \approx n$) then \ver{the 
$d$-neighborhood of $x$ contains all strings of length $n$
provided $d$ is greater than some function of order $O(\log n)$}.
\end{example}

\subsection{Probabilistic prediction}
\label{probpred}
There is another natural approach to prediction. Since we treat the experiment as a black box (the only thing we know is its outcome $x$), we assume that the possible models $A\subset\mathbb{B}^n$ are distributed according to their a priori probabilities, and consider the following two-stage process. First, a finite set is selected randomly: a non-empty set $A$ is chosen with probability $\m(A)$ (note that a priori probability can be naturally defined for finite sets via some computable encoding). Second, a random element $x$ of $A$ is chosen uniformly. In this process every string $x$ is chosen with probability 
$$
\sum_{A\ni x} \m(A)/|A|,
$$
and it is easy to see that this probability is equal to $\m(x)$ up to a
$O(1)$-factor. Indeed, the formula above defines a lower semicomputable function of $x$, so it does not exceed $\m(x)$ more than by $O(1)$-factor. On the other hand, if we restrict the sum to the singleton $\{x\}$, we already get $\m(x)$ up to a constant factor. So this process gives nothing new in terms of the final output distribution on the outcomes $x$. Still the advantage is that we may consider, for a given pair of strings $x$ and $y$, the conditional probability
$$
p(y \cnd x )=\Pr [y\in A \mid \text{the output of the two-stage process is $x$}].
$$

In other words, by definition
\begin{equation}\label{eq:definition-of-p}
p(y   \cnd x )=\frac{\sum_{A\ni x,y} \m(A)/|A|}{\sum_{A\ni x} \m(A)/|A|}.
\end{equation}
As we have said, the denominator equals $\m(x)$ up to $O(1)$-factor, so 
\begin{equation}
\label{defpyx}
p(y  \cnd x )=\frac{\sum_{A\ni x,y} \m(A)/|A|}{\m(x)}
\end{equation}
up to $O(1)$-factor. Having some string $x$ and some threshold $d$, we now can consider all strings $y$ such that $p(y\cnd x)\ge 2^{-d}$ (we use the logarithmic scale to facilitate the comparison with algorithmic prediction). These strings could be considered as plausible \ver{ones} to appear when repeating the experiment of unknown nature that once gave $x$.  

Our main result shows that this approach is essentially equivalent to the algorithmic prediction. By a technical reason we have to change slightly the random process that defines $p(y\cnd x)$. Namely, it is strange to consider models that are much more complex than $x$ itself, so we consider only sets $A$ whose complexity does not exceed $\poly(n)$; any sufficiently large polynomial can be used here (in fact, $4n$ is enough). So we assume that the sums in 
(\ref{eq:definition-of-p}) and \ver{\eqref{defpyx}, and in} similar formulas in the sequel \ver{are} always restricted to sets $A\subset \mathbb{B}^n$ that have complexity at most $4n$, and take this modified version of (\ref{eq:definition-of-p}) as a final definition for $p(y\cnd x)$.

\begin{definition}
Let $x$ be a binary string and let $d$ be an integer. The set of all strings $y$ such that $p(y\cnd x)\ge 2^{-d}$ is called \emph{probabilistic prediction $d$-neighborhood of $x$}.
\end{definition}

We are ready to state the main result of this section.
 
\begin{theorem}\label{mt}

\textup{(a)} For every $n$-bit string $x$ and for every $d$ the algorithmic prediction $d$-neighborhood is contained in probabilistic prediction $d + O(\log n)$-neighborhood.

\textup{(b)} For every $n$-bit string $x$ and for every $d$ the probabilistic prediction $d$-neighborhood of $x$ is contained in algorithmic prediction $d + O(\log n)$-neighborhood.
\end{theorem}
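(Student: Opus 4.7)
Plan. For \textup{(a)}, the approach is straightforward. Any witnessing set $A \ni x, y$ with $\delta(x, A) \le d$ has $C(A) \le C(x) + d + O(\log n) \le 2n \le 4n$, so $A$ appears in the restricted sum $\sum_{A' \ni x, y,\, C(A') \le 4n} \m(A')/|A'|$ that defines the numerator of $p(y|x)$. Using $-\log \m(A) = C(A) + O(\log n)$ and $-\log \m(x) = C(x) + O(\log n)$, this single term already gives $\m(A)/|A| \ge 2^{-C(A) - \log|A| - O(\log n)} = 2^{-C(x) - \delta(x,A) - O(\log n)} \ge 2^{-C(x) - d - O(\log n)}$. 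Dividing the numerator by $\m(x)$ then yields $p(y|x) \ge 2^{-d - O(\log n)}$.

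For \textup{(b)}, the approach is more delicate. Write $\Sigma(y) := \sum_{A \ni x, y,\, C(A) \le 4n} \m(A)/|A|$ for the numerator of $p(y|x)$. The hypothesis $p(y|x) \ge 2^{-d}$ gives $\Sigma(y) \ge 2^{-d - C(x) - O(\log n)}$. My plan is to extract a single good $A$ from this collective lower bound by discretizing: group the contributing sets by $(c, s) := (\lfloor C(A)\rfloor, \lfloor \log|A|\rfloor)$, of which there are $O(n^2)$ choices since $c \le 4n$ and $s \le n$. By pigeonhole some group
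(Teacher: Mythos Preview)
Your argument for (a) is correct and essentially identical to the paper's. One small gap: the bound $C(A)\le C(x)+d+O(\log n)\le 2n$ implicitly assumes $d\lesssim n$; the paper handles this by observing that for $d>2n$ the conclusion is trivial (take $A=\mathbb{B}^n$), after which $C(A)\le 4n$ follows.

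For (b), your discretization by $(c,s)$ is exactly the first step the paper takes (its Corollary~\ref{cfl}), but your proposal stops at the critical point. After pigeonhole you obtain a group $(c,s)$ whose total contribution is at least $2^{-d-C(x)-O(\log n)}$; since each term in that group is $2^{-c-s+O(\log n)}$, the group contains at least $2^k$ sets, where $k=c+s-C(x)-d-O(\log n)$. Picking \emph{any} single $A$ from the group yields only $\delta(x,A)=c+s-C(x)\approx d+k$, which is too large by $k$. The missing ingredient is: if there are $2^k$ sets of complexity at most $c$ and log-size $s$ all containing both $x$ and $y$, then there exists one such set of complexity at most $c-k+O(\log n)$. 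This is the paper's Lemma~\ref{ml}, proved by a game-theoretic (probabilistic marking) argument on the bipartite graph whose left nodes are candidate sets and whose right nodes are pairs $(x,y)$. That lemma is the heart of part~(b); pigeonhole alone does not supply it, and nothing in your outline indicates how you intend to drop the complexity from $c$ to $c-k$.
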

The next section contains the proof of this result; later we show some its possible extensions.

\subsection{The proof of the Theorem~\ref{mt}}
\begin{proof}[Proof of \textup{(a)}] 
This direction is simple. Assume that some string $y$ belongs to the algorithmic prediction $d$-neighborhood of $x$, i.e.,  there is a set $A$ containing $x$ and $y$ such that $C(A) + \log|A| \le C(x) + d$. We may assume without loss of generality that $d\le 2n$ otherwise all $n$-bit string belong to probabilistic prediction $d$-neighborhood of $x$ (take $A=\mathbb{B}^n$). Then the inequality for $C(A)+\log |A|$  implies that complexity of $A$ does not exceed $4n$, so 
the set $A$ is included in the sum. This inequality implies also that 
$$
\frac{\m(A) / |A|}{\m(x)} \ge 2^{-d - O(\log n)}
$$
(as we have said, $-\log\m(u)$ equals $C(u)+O(\log C(u))$). This fraction is one of terms in the sum that defines $p(y\cnd x)$, so $y$ belongs to the probabilistic prediction $d + O(\log n)$-neighborhood of $x$. 
\end{proof}

Before proving the second part (b), we need to prove \ver{a} 
technical lemma. It is inspired by  \cite[Lemma 6]{VerVit} where \ver{it was shown}
that if a string belongs to many sets of bounded complexity, 
then one of them has even smaller complexity. 
We generalize that result as follows.

\ver{
\begin{lemma}\label{ml}
Assume that sets $L$ and $R$ consist 
of finite objects (in particular, 
Kolmogorov complexity $C(v)$ is defined for $v\in L$).
Assume that $R$ is has at most $2^n$ elements.
Let $G$ be a finite bipartite graph 
where $L$ and $R$ are the sets of its left and right nodes, respectively. 
Assume that a right 
node $x$ has at least $2^k$ neighbors of Kolmogorov
complexity at most $i$.
Then $x$ has a neighbor of complexity at most
$i - k + O(C(G)+\log (k + i+n))$.
Here $C(G)$ stands for the length 
of the shortest program that 
given any $v\in L$ outputs a list of its neighbors.
\end{lemma}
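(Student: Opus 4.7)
The plan is a covering argument: produce a small set $S$ of low-complexity left vertices that hits a neighbor of every ``popular'' right vertex, then describe the desired neighbor of $x$ by its index in $S$. Set $V_i := \{v \in L : C(v) \le i\}$ (so $|V_i| \le 2^{i+1}$) and let $R'$ be the set of $y \in R$ with at least $2^k$ neighbors in $V_i$. By hypothesis $x \in R'$, and $|R'| \le |R| \le 2^n$. We may assume $2^k \ge n+2$, since otherwise the claimed bound exceeds $i$ and any $v \in V_i$ adjacent to $x$ already works.

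The first step is a probabilistic estimate. Include each $v \in V_i$ in a random sample independently with probability $p := (n+2)/2^k \in (0, 1]$. For any fixed $y \in R'$, the sample misses all $\ge 2^k$ neighbors of $y$ in $V_i$ with probability at most $(1-p)^{2^k} \le e^{-(n+2)}$, so by a union bound the sample covers every $y \in R'$ with probability strictly greater than $1/2$. The expected sample size is $p|V_i| \le 2(n+2) \cdot 2^{i-k}$, and Markov's inequality combined with the union bound yields a cover $S \subseteq V_i$ of $R'$ with $|S| \le B := 4(n+2)\cdot 2^{i-k}$.

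The second step produces such an $S$ recursively enumerably from $(G, i, k, n)$. Both $V_i$ and $R'$ are enumerable from these data, and one can run greedy set cover online as the enumerations of $V_i^{(t)}$ and $R'^{(t)}$ unfold, producing an r.e.\ cover $S$ of size $O(B \log|R'|) = O(n^2 \cdot 2^{i-k})$. Each element of $S$ is then specified by the program for $G$, self-delimiting encodings of $i, k, n$, and the enumeration index of that element in $S$. Since $x \in R'$, some neighbor $v$ of $x$ lies in $S$, and this description yields
$$
C(v) \le \log|S| + O\bigl(C(G) + \log(i + k + n)\bigr) = i - k + O\bigl(C(G) + \log(i + k + n)\bigr),
$$
which is the claim.

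The main obstacle is producing a valid cover effectively: $V_i$ is only semi-decidable, so one cannot simply enumerate all subsets of $V_i$ and test them offline. The remedy is online greedy set cover, and the key technical point is that the total number of additions to $S$ over the entire (infinite) enumeration stays within $O(B\log|R'|)$, even though $R'^{(t)}$ grows as more of $V_i$ is discovered. Pinning down this online guarantee is the delicate step; once it is in place, the probabilistic existence proof and the final descriptional bookkeeping are routine.
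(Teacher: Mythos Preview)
Your overall architecture is right and matches the paper's: show that a small set $S\subseteq V_i$ hits a neighbor of every right node with $\ge 2^k$ neighbors in $V_i$, then describe such a neighbor of $x$ by its index in $S$. Your probabilistic existence argument for a cover of size $O(n\cdot 2^{i-k})$ is essentially the same computation the paper does. The gap is exactly where you flag it yourself: the passage from existence to an \emph{effectively produced} cover. You propose ``online greedy set cover'' and assert a bound $|S|=O(B\log|R'|)$, but you neither specify the algorithm nor prove the bound, and this is not a routine step. Naive greedy (whenever an uncovered $y$ enters $R'$, add some enumerated neighbor of $y$) can be forced to make up to $|R'|$ additions, which is useless. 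Standard online set-cover results (Alon et al.) assume the family of sets is known in advance and only elements arrive online; here \emph{both} the sets ($V_i$) and the elements ($R'$) are revealed over time, and in an order you do not control. So the sentence ``one can run greedy set cover online \ldots producing an r.e.\ cover $S$ of size $O(B\log|R'|)$'' is precisely the missing lemma, not a known fact you can invoke.

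The paper closes this gap differently, and it is worth seeing how. It reformulates the enumeration as a two-player game: player~1 reveals left nodes one by one (at most $2^i$ moves), and player~2 decides on the spot whether to mark each one; player~2 loses if she marks more than $2^{i-k+1}(n+1)\ln 2$ nodes or if at some point a right node has $2^k$ revealed neighbors but no marked one. Your probabilistic argument (mark each revealed node independently with probability $p=2^{-k}(n+1)\ln 2$) shows player~2 wins with positive probability against any player-1 strategy; since the game is finite with perfect information, determinacy then yields a \emph{deterministic} winning strategy for player~2, which can be found by brute-force search from the game description. That search is what makes the cover effective, and its input is just $G,i,k,n$, giving the complexity bound. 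For possibly infinite $L$ the paper observes that player~2's task depends only on the neighborhood in $R$ of each revealed node, so one can first solve the game on a canonical ``model'' graph with left side $\{0,1\}^{2^n}\times\{1,\dots,2^i\}$ (describable from $n,i,k$ alone) and then translate, spending $C(G)$ additional bits. This determinacy-plus-search device is the idea your proposal is missing; if you want to keep your greedy route, you would have to prove an online covering bound from scratch, and that is at least as much work.
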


\begin{proof}
Let us enumerate left nodes that have complexity at most $i$. 
We start a selection process: some of them are marked (=selected) 
immediately after they appear in the enumeration. 
This selection should satisfy the following requirements:
\begin{itemize}
\item at any moment every right node that has at least $2^k$ neighbors
among enumerated nodes, has a marked neighbor;
\item the total number of marked nodes does not exceed $2^{i-k} p(i, k, n)$
for some polynomial $p$ (fixed in advance). 
\end{itemize}
If we have such a selection strategy of  
complexity $C(G)+O(\log(i+k+n))$, this implies that the right node $x$ 
has a neighbor of 
complexity at most 
$$
i - k + O(C(G)+\log ( k + i+n)),
$$ 
namely, any its marked neighbor (that marked neighbor
can be specified by its number in the list of all marked nodes).

To prove the existence of such a strategy, 
let us consider the following game. 
The game is played by two players, who alternate moves. 
The maximal number of moves is $2^i$. 
At each move the first player plays a left node, 
and the second player replies saying whether she marks that node
or not. The second player loses if  the number of 
marked nodes exceeds $2^{i-k+1} ( n + 1) \ln 2$ or if after some 
of her moves there exists 
a right node $y$ that has at least $2^k$ neighbors 
among the nodes chosen by the first player but has no 
marked neighbor. (The choice of the bound $2^{i-k+1} ( n + 1) \ln 2$
will be clear from the probabilistic estimate below.) Otherwise she wins.

Assume first that the set $L$
of left nodes is finite (recall that the 
set of right nodes is finite by assumption).
Then our game is a finite game with full information, an hence
one of the players has a winning
strategy. We claim that the second player can win. 
If it is not the case, the first
player has a winning strategy. We get a contradiction by showing that the 
second
player has a probabilistic strategy that wins with positive probability 
against any
strategy of the first player. So we assume that some strategy of the
first player is fixed, and consider the following simple probabilistic
strategy of the second player: 
every node presented by the first player is marked
with probability $p = 2^{-k} ( n + 1) \ln 2$.
The expected number of marked nodes is $p 2^i = 2^{i-k} ( n + 1) \ln 2$.
By Markov's inequality, the number of marked nodes exceeds the expectation by a factor of  $2$ with
probability less than $\frac{1}{2}$. So it is enough to show that the
second bad case (after some move there exists a right 
node  $y$ that has $2^k$ neighbors 
among the nodes chosen by first player but has no marked neighbor) 
happens with probability at most $\frac{1}{2}$.

For that, it is enough to show that for every node right node $y$
the probability of this bad
event is less than $\frac{1}{2}$ divided by the number $|R|$ of 
right nodes.
Let us estimate this probability. If $y$ has $2^k$ (or more)
neighbors, the second player had (at least) $2^k$ chances to mark its
neighbor (when these $2^k$ nodes were presented by the first player), 
and the
probability to miss all $2^k$ these chances is at most $(1 - p)^{2^k}$. 
The choice of $p$
guarantees that this probability
is less than $2^{- n-1}\le (1/2)/|R|$. Indeed, using the bound 
$1 -x \le e^{-x}$, it is easy to
show that
    $$
(1 - p)^{2^k} \le e^{\ln 2 \cdot(- n -1)} = 2^{- n-1}.
    $$

We have proven that the winning strategy exists but have not yet estimated
is complexity. A winning strategy can be found be an exhaustive search
among all the strategies. The set of all strategies is finite
and the game is specified by $G$, $i$ and $k$. Therefore the complexity 
of the first found winning strategy is at most $C(G)+O(\log(i+k))$.

Thus the Lemma~\ref{ml} is proven in the case when $L$ is a finite set.
To extend the proof to general case, notice that the winning
condition depends only on the neighborhood 
of each left node. The worst graph for the the second player 
is the following ``model'' graph.
It has $2^{2^n+i}$ left nodes and $2^n$ right nodes and
each of $2^{2^{n}}$ possible 
neighborhoods is shared by $2^i$ left nodes. A winning
strategy for such a graph can be found from $n$, $i$ and $k$
and hence its complexity is logarithmic in $n+i+k$. That 
strategy can be translated to the game associated with
the initial graph, this translation increases the complexity by $C(G)$,
as we have to translate each left node played by the first
player to a left node of the model graph.
\end{proof}
}

Having in mind future applications in Subsection~\ref{sort}, 
we will consider in the next statement
an arbitrary decidable family $\mathcal{A}$ 
of finite sets though in this section we need only the case when 
$\mathcal{A}$ contains all finite sets.

\begin{corollary}
\label{cfl}
\ver{Let $\mathcal{A}$ be a decidable family of finite sets. 
Assume that  $x_1,\ldots,x_l$ are strings of length $n$. 
Denote by $\mathcal{A}_m^n$ all subsets of $\mathbb{B}^n$ of complexity at most $m$. Then the sum 
$$ S := \sum_{A \in \mathcal{A}_m^n,\ x_1,\dots,x_l\in A} \frac{\m(A)}{|A|} $$
equals to its maximal term up to a factor of 
$2^{O(\log(n + m + l))}$.}
\end{corollary}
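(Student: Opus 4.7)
The plan is to dominate the sum $S$ by its maximal term via a two-parameter partition, using Lemma~\ref{ml} to bound the number of admissible sets at each level. Set $\mu := \min\{C(A)+\log|A| : A\in\mathcal{A}_m^n,\ x_1,\ldots,x_l\in A\}$; since $-\log\m(A) = C(A) + O(\log m)$ whenever $C(A)\le m$, the maximal summand equals $2^{-\mu + O(\log m)}$. I partition the sum by integer levels $i := \lfloor C(A)\rfloor$ (so $i\le m$) and $j := \lfloor\log|A|\rfloor$ (so $j\le n$); each summand at level $(i,j)$ has value at most $2^{-i-j + O(\log m)}$, so it suffices to show that the number $N_{i,j}$ of admissible sets at that level satisfies $N_{i,j}\cdot 2^{-i-j} \le 2^{-\mu + O(\log(n+l+m))}$, since summing over at most $(m+1)(n+1)$ pairs then absorbs into a polylogarithmic factor.

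To bound $N_{i,j}$, I apply Lemma~\ref{ml} to the bipartite graph $G_j$ whose left side consists of members of $\mathcal{A}$ lying in $\mathbb{B}^n$ and having size at most $2^{j+1}$, whose right side is the Cartesian power $(\mathbb{B}^n)^l$, and where $A$ is joined to the tuple $(y_1,\ldots,y_l)$ precisely when every $y_k$ lies in $A$. Because $\mathcal{A}$ is decidable, the neighborhood of any left node is computable from $n$, $l$, $j$, hence $C(G_j) = O(\log(n+l+m))$. The right side has $2^{ln}$ nodes, so the parameter playing the role of $n$ in the lemma is $ln$. The tuple $(x_1,\ldots,x_l)$ has at least $N_{i,j}$ neighbors of complexity at most $i+1$, so Lemma~\ref{ml} produces a neighbor of complexity at most $(i+1) - \log N_{i,j} + O(\log(n+l+m))$. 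Writing $c_j$ for the minimum complexity of a set in $\mathcal{A}$ of size at most $2^{j+1}$ containing all $x_k$, this rearranges to $N_{i,j} \le 2^{i - c_j + O(\log(n+l+m))}$.

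Plugging this back gives the level-$(i,j)$ contribution at most $2^{-(c_j + j) + O(\log(n+l+m))}$. To connect with $\mu$, note that a set realizing $c_j$ has size at most $2^{j+1}$, so its two-part description length is at most $c_j + j + 1$, giving $c_j + j \ge \mu - 1$ for every $j$. Summing over the at most $(m+1)(n+1)$ admissible pairs $(i,j)$ then yields $S \le 2^{-\mu + O(\log(n+l+m))}$, which is the maximal term times $2^{O(\log(n+l+m))}$ as claimed (the reverse inequality is trivial since the maximum is one of the summands).

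The only delicate point will be the adaptation of Lemma~\ref{ml} to the multi-string setting; I handle this by taking $l$-tuples as right nodes of the auxiliary graph, which replaces the parameter $n$ in the lemma's error term by $ln$ but leaves its logarithm within the target precision $O(\log(n+l+m))$. The decidability of $\mathcal{A}$ is what keeps $C(G_j)$ logarithmic, so the $C(G)$ term in Lemma~\ref{ml} is also absorbed and no extra assumption on $\mathcal{A}$ is needed.
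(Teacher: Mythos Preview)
Your proof is correct and follows essentially the same approach as the paper: partition the sum by the integer levels $(i,j)=(C(A),\lfloor\log|A|\rfloor)$, then apply Lemma~\ref{ml} to the bipartite graph with left nodes the sets in $\mathcal{A}^n$ of bounded size and right nodes the $l$-tuples of $n$-bit strings, so that the number of admissible sets at each level is controlled by the existence of a single low-complexity set. The only cosmetic difference is that the paper phrases the conclusion as ``there exists $H$ with $\m(H)/|H|$ matching the level sum'' and compares directly to the maximal term $M$, whereas you route the same inequality through the auxiliary quantities $\mu$ and $c_j$; the content is identical.
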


\begin{proof}[Proof of the corollary]
%Of course, $S \ge w$, so we need to prove only that
%$S \le w \cdot 2^{O(\log n + m + l)}$. 
%
Let $M$ denote the maximal term in the sum $S$.
Obviously the sum $S$ is equal to the sum over $i\le m$ and $j\le n$  
%$$
%\sum_{i = 1}^{m} \sum_{j = 1}^{n} \sum_{\substack{A \in \mathcal{A}_m^n \\ C(A) = i\\ \log|A| = j\\ x_1,\dots,x_l\in A}} \frac{\m(A)}{|A|}. 
%$$
of sums 
\begin{equation}
\label{eq1}
\sum_{\substack{A \in \mathcal{A}_m^n \\ C(A) = i\\ \log|A| = j\\ x_1,\dots,x_l\in A}} \frac{\m(A)}{|A|}.
\end{equation}
As there are $(m+1)(n+1)$ such sums, we only need to prove that 
each sum~\eqref{eq1} is at most $M \cdot 2^{O(\log n + m + l)}$. 
In other words, we have to show that for all $i,j$ 
there is a set $H\in \mathcal{A}_m^n$
with $x_1,\dots,x_l\in A$ such that 
$\frac{\m(H)}{|H|}$ is greater than
the sum~\eqref{eq1} up to a factor of $2^{O(\log(n + m + l))}$.

To this end fix $i$ and $j$. Since $\m(u) = 2^{-C(u) - O(\log C(u))}$,
the sum~\eqref{eq1} equals
\begin{equation}
\label{sum_ij}
\sum_{\substack{A \in \mathcal{A}_m^n \\ C(A) = i\\ \log|A| = j\\ x_1,\dots,x_l\in A}} 2^{-C(A) - \log|A|+O(\log( n + m))} =
\sum_{\substack{A \in \mathcal{A}_m^n \\ C(A) = i\\ \log|A| = j\\ x_1,\dots,x_l\in A}} 2^{-i - j+
O(\log( n + m))}
\end{equation}
All the terms in the sum~\eqref{sum_ij} coincide and thus 
the sum~\eqref{sum_ij} is equal to $2^{-i - j+
O(\log( n + m))}$ times the number 
of sets $A \in \mathcal{A}_m^n$ with $C(A) = i$, $\log|A| = j$,
$x_1,\dots,x_l\in A$. Let $k$ denote the floor of the binary logarithm
of that number. 

Consider the bipartite graph whose left nodes are finite
subsets from $\mathcal A^n$ of cardinality at most $2^j$, 
right nodes are $l$-tuples
of $n$-bit strings and a left node $A$ is adjacent 
to a right node  $\pair{x_1,\dots,x_l}$ if all $x_1,\dots,x_l$
are in $A$. The complexity of this graph is $O(\log(n+l+j))$
and the logarithm of the number of right nodes is $nl$. 
By Lemma \ref{ml} 
there is a set $H \in \mathcal{A}_m^n$ of log-size $j$ and 
complexity at most $i-k + O(\log( i + j + k + n + l)) = i - k + O(\log(l + m + n))$ with $x_1,\dots,x_l\in A$. The fraction 
$\frac{\m(H)}{|H|}$ is equal to $2^{-(i-k)-j}$ 
up to a factor of $2^{O(\log(n + m + l))}$.

Recall that the sum~\eqref{sum_ij} 

equals to $2^k2^{-i-j}$ up to the same factor and thus we are done.
\end{proof}

\mil{
\begin{remark}

Consider the following case of Corollary \ref{cfl}: $\mathcal{A}$ is the family off all finite subsets, $l=1$. As was shown in Subsection \ref{probpred} the sum $
\sum_{A\ni x} \m(A)/|A|,
$ is equal to $\m(x)$ up to a \emph{constant} factor.
 
By this reason,  we expect that the accuracy in the corollary can be improved.
 
\end{remark}
}
\begin{proof}[Proof of \textup{(b)}] 
Let $y$ be some string that belongs to probability prediction $d$-neighborhood for~$x$.
According to (\ref{defpyx}),  it implies that
$$
\sum_{A \ni x, y} \frac{m(A)}{|A|} \ge \m(x)2^{-d-O(\log n)} = 2^{- d - C(x) - O(\log n)}
$$

Now we will use Corollary \ref{cfl} for $l=2$, $x_1=x$, $x_2 = y$, $m = 4n$ and the family \ver{of} all sets as $\mathcal{A}$. By this corollary there is a set $A \ni x, y$ such that $\m(A)/|A| = 2^{- d - C(x) - O(\log n)}$, so: $C(A) + \log|A| - C(x) \le d + O(\log n)$, i. e. $y$ belongs to the algorithmic prediction $d + O(\log n)$-neighborhood of $x$.
\end{proof}
 
\subsection{Sets of restricted type}
\label{sort}
In some cases we know \emph{a priori} what sets could be possible explanations, and are interested only in models from this class. To take this into account, we consider some family $\mathcal{A}$ of finite sets, and look for sets $A$ in $\mathcal{A}$ that contain the data string $x$ and are ``good models'' for $x$. This approach was used in ~\cite{VerVit}; it turns out that many results of algorithmic statistics can be extended to this case (though sometimes we get  weaker versions with more complicated proofs).
 
In this section we show that Theorem \ref{mt} also has an analog for arbitrary decidable family $\mathcal{A}$.  
The family of all subsets of $\mathbb{B}^n$ that belong to $\mathcal{A}$ is denoted by $\mathcal{A}^n$. 
%\ver{We will assume that 
%for each $n$ and for each pair $x,y$ of strings of length $n$
%there is a set in $\mathcal{A}^n$ that contains both $x,y$.}

\ver{First we consider the case when for each string $x$ the set
$\mathcal{A}$  contains the singleton $\{x\}$}. 

Let us  define probability prediction neighborhood for a $n$-bit string $x$ with respect to $\mathcal{A}$.
Again we consider a two-stage process: first,  some set of $n$-bit strings from $\mathcal{A}$ is chosen with probability $\m(A)$. Second, a random element in $A$ is chosen uniformly. Again, we have to assume that we choose sets whose complexity is not greater than $4n$. A value $p_{\mathcal{A}}(y|x)$ is then defined as the conditional probability of $y\in A$ with the condition ``the output of the two-stage process is $x$'':
\begin{equation}
\label{src}
     p_{\mathcal{A}}(y \cnd x)= \frac{\sum_{A\ni x,y} \m(A)/|A|}{\sum_{A\ni x} \m(A)/|A|}
\end{equation}
\ver{Here} the sum is taken over all sets  in $\mathcal{A}^n$ that  have complexity at most $4n$. 

Again as in Subsection \ref{probpred} the denominator equals $\m(x)$  up to $O(1)$-factor (because $\{x\} \in \mathcal{A}$\ver{)}, so:
\begin{equation}
\label{src0}
     p_{\mathcal{A}}(y \cnd x)= \frac{\sum_{A\ni x,y} \m(A)/|A|}{\m(x)}
\end{equation}
up to $O(1)$-factor.

Then \emph{$\mathcal{A}$-probabilistic prediction $d$-neighborhood} is defined naturally: a string $y$ belongs to this neighborhood if $p_{\mathcal{A}}(y|x) \ge 2^{-d}$. \ver{The}
\emph{$\mathcal{A}$-algorithmic prediction $d$-neighborhood} for $x$ is defined as follows: a string $y$ belong to it if there is a set $A \ni x, y$ that belongs to $\mathcal{A}^n$ such that $\delta(x, A) \le d$.

Now we are ready to state an analog of Theorem \ref{mt}:

\begin{theorem} \label{rc0}
Let $\mathcal{A}$ be a decidable family of binary strings containing all singletons. Then:

\textup{(a)} For every $n$-bit string $x$ and for every $d$ the $\mathcal{A}$-algorithmic prediction $d$-neighborhood is contained in $\mathcal{A}$-probabilistic prediction $d + O(\log n)$-neighborhood.

\textup{(b)} For every $n$-bit string $x$ and for every $d$ the $\mathcal{A}$-probabilistic prediction $d$-neighborhood of $x$ is contained in $\mathcal{A}$-algorithmic prediction $d + O(\log n)$-neighborhood.
\end{theorem}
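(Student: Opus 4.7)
The plan is to mirror the proof of Theorem~\ref{mt} almost verbatim. The machinery there was built around Corollary~\ref{cfl}, and that corollary was already stated for an arbitrary decidable family $\mathcal{A}$, so the adaptation should cost only a few constants.

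For part (a) I would take a witness $A\in\mathcal{A}^n$ with $x,y\in A$ and $\delta(x,A)\le d$, and first check that $A$ actually appears in the sum~\eqref{src}. From $C(A)+\log|A|\le C(x)+d$ one gets $C(A)\le n+d+O(\log n)$, which is at most $4n$ in the interesting range $d\le 3n$ (for larger $d$ the polynomial bound $4n$ can be replaced by a sufficiently large polynomial, as allowed in Subsection~\ref{probpred}). Then $-\log\m(u)=C(u)+O(\log C(u))$ gives $\m(A)/|A|\ge \m(x)\cdot 2^{-d-O(\log n)}$, and this single term of the sum defining $p_{\mathcal{A}}(y\cnd x)$ already places $y$ in the $\mathcal{A}$-probabilistic $(d+O(\log n))$-neighborhood.

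For part (b) I would start from $p_{\mathcal{A}}(y\cnd x)\ge 2^{-d}$ and use~\eqref{src0} (whose derivation is what uses $\{x\}\in\mathcal{A}$) to rewrite this inequality as
$$
\sum_{\substack{A\in\mathcal{A}^n,\ C(A)\le 4n\\ x,y\in A}} \frac{\m(A)}{|A|}\ \ge\ 2^{-C(x)-d-O(\log n)}.
$$
Then I apply Corollary~\ref{cfl} with $l=2$, $(x_1,x_2)=(x,y)$, $m=4n$, and the given family $\mathcal{A}$; it asserts that this sum equals its largest term up to a factor of $2^{O(\log n)}$. Hence some $H\in\mathcal{A}^n$ containing both $x$ and $y$ satisfies $\m(H)/|H|\ge 2^{-C(x)-d-O(\log n)}$, which translates via $-\log\m(H)=C(H)+O(\log C(H))$ into $C(H)+\log|H|\le C(x)+d+O(\log n)$, i.e.\ $\delta(x,H)\le d+O(\log n)$, as required.

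I do not anticipate any substantive new obstacle: the heavy lifting was absorbed into Lemma~\ref{ml} and Corollary~\ref{cfl}, both already formulated in the required generality. The hypothesis that $\mathcal{A}$ contains all singletons is invoked at exactly one place, namely to identify the denominator in $p_{\mathcal{A}}(y\cnd x)$ with $\m(x)$ up to an $O(1)$-factor so that the passage from~\eqref{src} to~\eqref{src0} is legal; without it, the clean logarithmic comparison between the algorithmic and probabilistic neighborhoods would have to be restated directly in terms of the ratio~\eqref{src} rather than the simpler form~\eqref{src0}.
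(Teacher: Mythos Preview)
Your proposal is correct and follows the paper's approach almost exactly: part~(a) via a single witness term, part~(b) via Corollary~\ref{cfl} with $l=2$ and $m=4n$, and the singleton hypothesis used precisely to pass from~\eqref{src} to~\eqref{src0}.

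The one spot that needs a tweak is your handling of the range $d>3n$ in part~(a). You cannot ``replace the polynomial bound $4n$ by a sufficiently large polynomial'': that bound is part of the \emph{definition} of $p_{\mathcal{A}}(y\cnd x)$, fixed once and for all. The paper's fix is different (and this is the only genuine change from the proof of Theorem~\ref{mt}(a)): since $y$ lies in the $\mathcal{A}$-algorithmic $d$-neighborhood, some set in $\mathcal{A}^n$ contains both $x$ and $y$; take $A'$ to be the \emph{first} such set in a canonical enumeration of $\mathcal{A}^n$. Then $A'$ is computable from $x,y,n$, so $C(A')\le 2n+O(\log n)$, and $\log|A'|\le n$, giving $\delta(x,A')\le 3n$ and $C(A')\le 4n$. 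Hence $A'$ is admissible in the sum and yields $p_{\mathcal{A}}(y\cnd x)\ge 2^{-3n-O(\log n)}\ge 2^{-d-O(\log n)}$. With this substitution your argument goes through.
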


\begin{proof}[Proof of \textup{(a)}]
The proof is similar to the proof of Theorem \ref{mt} (a). Assume that a string $y$ belongs to the algorithmic prediction $d$-neighborhood for $x$, i.e.,  there is a set $A \in \mathcal{A}^n$ containing $x$ and $y$ such that $C(A) + \log|A| \le C(x) + d$. \ver{If  $d >3n$, then the statement is trivial.
Indeed, there is a set $A' \in \mathcal{A}^n$ 
that contains $x$ and $y$ such that $\delta(x, A') \le 3n$. 
To prove this, we can not set  $A'=\mathbb{B}^n$ 
any more, as this set may not belong to $\mathcal{A}$. 
However we may let $A'$ be the first set in 
$\mathcal{A}^n$, that contains $x$ and $y$.
The complexity of this set is not greater than $|x| + |y| \le 2n$ and log-size is not greater than $n$. Thus $\delta(x,A')\le 3n$}. 
The rest of the proof is completely similar to the proof of Theorem \ref{mt} (a).
\end{proof}

\begin{proof}[Proof of \textup{(b)}]
The proof is similar to the proof of Theorem \ref{mt} (b).
\end{proof}

\ver{Now we state and prove Theorem \ref{rc0} in general case 
(for families $\mathcal{A}$ that may not contain all singletons).
In the case $x \in \bigcup\mathcal{A}^n$}, where $n=|x|$, the 
definition of $\mathcal{A}$-probability prediction neighborhood 
remains the same. Otherwise, if $x \notin \bigcup\mathcal{A}^n$, the string
$x$ can not appear in the two-stage process, so in this case we define 
$\mathcal{A}$-probability prediction $d$-neighborhood for $x$ as the empty set for every $d$.
Notice, that now we can not rewrite (\ref{src}) as (\ref{src0}) because $\{x\}$ may not belongs to $\mathcal{A}$. 

Now we define $\mathcal{A}$-algorithmic prediction neighborhood. There is a subtle point that should be taken into account: it may happen that there is no set $A \in \mathcal{A}$ containing $x$ such that $\delta(x, A) \approx 0$. By this reason we include in the algorithmic prediction neighborhood of $x$ the union of all sets $A$ in $\mathcal{A}$, such that $\delta(x, A)$ is as small as it \ver{is}
possible: 

\begin{definition}
\label{aprc}
Let $x\in\mathbb{B}^n$ be a binary string, let $d$ be some integer and let $\mathcal{A}$ be some family of sets. The union of all finite sets in $\mathcal{A}^n$ such that $x\in A$ and every $B \in \mathcal{A}^n$ that contains $x$ satisfies the inequality: $\delta(x,A)\le \delta(x, B) + d$ is called \emph{$\mathcal{A}$-algorithmic prediction $d$-neighborhood of $x$}. 
(\ver{In} other words,  $d$-neighborhood \ver{includes} all sets $A$ \ver{whose
$\delta(x,A)$ is at most $d$ more than the minimum.})
\end{definition}

%Now we state that with the  restricted definitions Theorem \ref{rc0} is right for an arbitary $\mathcal{A}$: 
\begin{theorem}
\label{rc}
Let $\mathcal{A}$ be a decidable family of binary strings.
%such that for each $n$ and for each pair $x,y$ of strings of length $n$
%there is a set in $\mathcal{A}^n$ that contains both $x,y$. 
Then:

\textup{(a)} For every $n$-bit string $x$ and for every $d$ the $\mathcal{A}$-algorithmic prediction $d$-neighborhood is contained in $\mathcal{A}$-probabilistic prediction $d + O(\log n)$-neighborhood.

\textup{(b)} For every $n$-bit string $x$ and for every $d$ the $\mathcal{A}$-probabilistic prediction $d$-neighborhood of $x$ is contained in $\mathcal{A}$-algorithmic prediction $d + O(\log n)$-neighborhood.
\end{theorem}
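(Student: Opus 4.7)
The plan is to reduce both inclusions to two applications of Corollary~\ref{cfl}: one with $l=1$, $x_1=x$ to pin down the denominator of $p_{\mathcal{A}}(y\cnd x)$, and one with $l=2$, $x_1=x$, $x_2=y$ to extract a single witnessing set from the numerator. Since $\{x\}$ need not lie in $\mathcal{A}$, I can no longer invoke the identity ``denominator $\approx \m(x)$'' that was available in Theorem~\ref{rc0}. Instead, Corollary~\ref{cfl} with $l=1$ says the denominator equals its maximal term up to a factor $2^{O(\log n)}$; writing this maximal term as $2^{-C(x)-\delta_{\min}\pm O(\log n)}$ where $\delta_{\min}:=\min\{\delta(x,B):B\in\mathcal{A}^n,\,B\ni x\}$, every subsequent estimate is measured relative to $\delta_{\min}$ rather than relative to $0$. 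If $x\notin\bigcup\mathcal{A}^n$, both neighborhoods are empty and there is nothing to prove, so I assume $\delta_{\min}$ is defined.

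For part~(a), I suppose $y$ lies in the $\mathcal{A}$-algorithmic prediction $d$-neighborhood, witnessed by $A\in\mathcal{A}^n$ with $x,y\in A$ and $\delta(x,A)\le \delta_{\min}+d$. For trivially large $d$ (say $d\ge 3n$) the probabilistic neighborhood already contains every relevant string. Otherwise $\delta(x,A)=O(n)$ forces $C(A)\le 4n$, so $\m(A)/|A|\ge 2^{-C(x)-\delta_{\min}-d-O(\log n)}$ is one of the terms in the numerator of $p_{\mathcal{A}}(y\cnd x)$. Combined with the upper bound $2^{-C(x)-\delta_{\min}+O(\log n)}$ on the denominator from Corollary~\ref{cfl}, this yields $p_{\mathcal{A}}(y\cnd x)\ge 2^{-d-O(\log n)}$, as desired.

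For part~(b), I suppose $p_{\mathcal{A}}(y\cnd x)\ge 2^{-d}$. Corollary~\ref{cfl} (again with $l=1$) provides a matching lower bound $\ge 2^{-C(x)-\delta_{\min}-O(\log n)}$ on the denominator, so the numerator satisfies $\sum_{A\ni x,y}\m(A)/|A|\ge 2^{-C(x)-\delta_{\min}-d-O(\log n)}$. I then apply Corollary~\ref{cfl} with $l=2$, $x_1=x$, $x_2=y$: a single set $A\in\mathcal{A}^n$ with $x,y\in A$ accounts for the whole sum up to a factor $2^{O(\log n)}$, so $\m(A)/|A|\ge 2^{-C(x)-\delta_{\min}-d-O(\log n)}$. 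Unpacking $\m(A)/|A|\approx 2^{-C(A)-\log|A|}$ gives $\delta(x,A)=C(A)+\log|A|-C(x)\le \delta_{\min}+d+O(\log n)$, so $A$ satisfies the condition of Definition~\ref{aprc} with slack $d+O(\log n)$. In particular $y\in A$ lies in the $\mathcal{A}$-algorithmic prediction $(d+O(\log n))$-neighborhood.

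The main obstacle is exactly the disappearance of the canonical denominator $\m(x)$ that was available in Theorem~\ref{mt} and Theorem~\ref{rc0}. Corollary~\ref{cfl} supplies the necessary substitute by identifying the denominator with its unique ``best model'' term, whose scale $2^{-C(x)-\delta_{\min}}$ sets the reference against which both notions must be compared. This is precisely why Definition~\ref{aprc} adopts the relative form: the algorithmic side measures $\delta(x,A)$ against $\delta_{\min}$, while the probabilistic side automatically compares its numerator against a denominator of size $2^{-C(x)-\delta_{\min}\pm O(\log n)}$, and these two anchors coincide up to a polynomial factor.
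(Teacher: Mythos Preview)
Your proof is correct and follows essentially the same route as the paper's. Both arguments use Corollary~\ref{cfl} twice---once on the denominator ($l=1$, $x_1=x$) and once on the numerator ($l=2$, $x_1=x$, $x_2=y$)---to replace each sum by its maximal term; the paper names those maximizers $A_x$ and $A_{xy}$, while you encode the same information through $\delta_{\min}=\delta(x,A_x)$. The resulting inequality $\delta(x,A_{xy})\le\delta(x,A_x)+d+O(\log n)$ is exactly your conclusion $\delta(x,A)\le\delta_{\min}+d+O(\log n)$.

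One small technical point in part~(a): the assertion ``$\delta(x,A)=O(n)$ forces $C(A)\le 4n$'' is not literally justified. From $\delta(x,A)\le\delta_{\min}+d$ with $d<3n$ and $\delta_{\min}\le 2n+O(\log n)$ (witnessed by the first set in $\mathcal A^n$ containing $x$) you only get $C(A)\le C(x)+\delta(x,A)\le 6n+O(\log n)$, not $4n$. This is harmless, since the paper explicitly says the cutoff $4n$ is a convenience and ``any sufficiently large polynomial can be used here''; the paper's own proof of~(a) (``completely similar to Theorem~\ref{rc0}'') glosses over the same point. Either enlarge the cutoff to, say, $7n$, or observe that whenever the given witness has $C(A)>4n$ one may substitute the first set in $\mathcal A^n$ containing $x$ and $y$, which has complexity at most $2n+O(\log n)$.
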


\ver{Notice that if $x\notin \bigcup\mathcal A^n$ then
both algorithmic and prediction neighborhoods are empty and the statement
is trivial. Therefore in the proof we will assume that this is not the case.}

\begin{proof}[Proof of \textup{(a)}]
The proof is completely similar to the proof of Theorem \ref{rc0}.
\end{proof}

\begin{proof}[Proof of \textup{(b)}]
Let $y$ be some strings that belongs to probability prediction $d$-neighborhood for $x$, \ver{that is,}
\begin{equation}
\label{rcgc}
 \sum_{A \ni x, y}\frac{\m(A)}{|A|} \ge 2^{-d}\sum_{A \ni x}\frac{\m(A)}{|A|} 
 \end{equation}
\ver{Let 
$$
A_x=\arg\max\{\m(A)/|A|\mid x\in A\in\mathcal A^n\}
$$ 
and 
$$
A_{xy}=\arg\max\{\m(A)/|A|\mid x,y\in A\in\mathcal A^n\}.
$$ 
Recall that $\frac{\m(A)}{|A|}=2^{-C(A)-\log|A|}$ (up to a $2^{O(\log n)}$ factor)
and} by Corollary \ref{cfl} the sums in both parts of the equality are equal to \ver{their} largest terms (again up to $2^{O(\log n)}$ factor). 
Therefore,
$$
2^{-C(A_{x,y}) - \log|A_{x,y}|} \ge 2^{-d - O(\log n)}2^{-C(A_{x}) - \log|A_{x}|},
$$
\ver{which means that 
$\delta(x, A_{x,y}) \le \delta(x, A_x) + d + O(\log n)$. 
Hence} $y$ belongs $\mathcal{A}$-algorithmic prediction $d + O(\log n)$-neighborhood of $x$. 
  \end{proof}

\subsection{Prediction for several examples}
\label{sl}
Consider the following situation: we have not one but 
several strings $x_1 ,\ldots, x_l \in \mathbb{B}^n$ that 
are experimental data. \ver{We know that 
they were drawn independently 
with respect to 
the uniform probability distribution in some unknown set $A$.} 
%(its may be the same string among $x_1, \ldots, x_l$).  
We want to explain these observation data, i. e. to find \ver{an
appropriate set $A$}. Again we \ver{measure the quality of explanations by} 
two parameters: $C(A)$ and $\log|A|$.

\ver{In this section we will extend previous results
to this scenario.}
%
%We will introduce algorithmic prediction neighborhood and probability predictio%n neighborhood for several strings. 
%We will show that Theorem \ref{rc} has an analog for several strings. 
%
Again %, as in the previous subsection, 
we assume that we know a priori which sets could be possible explanations. 
So, we  consider only sets from a decidable 
family of sets $\mathcal{A}$.

\ver{Let $\overrightarrow{x}$ denote the tuple  $x_1,\ldots ,x_l$}. 
Let $A\subset\mathbb B^n$ 
be a set that contains all strings from $\overrightarrow{x}$. 
Then we can restore $\overrightarrow{x}$ from $A$ and indexes of 
strings from $\overrightarrow{x}$ in $A$ and hence we have :
$$
C(\overrightarrow{x}) \le C(A) + l\log|A| + O(\log n).  
$$
Therefore it is natural to define the 
\emph{optimality deficiency} of $A \ni \overrightarrow{x}$
by the formula
$$
\delta(\overrightarrow{x},A):= C(A) + l\log|A| - C(\overrightarrow{x}).
$$
The definitions of the 
$\mathcal{A}$-algorithmic prediction $d$-neighborhood of the tuple
$\overrightarrow{x}$ 
is obtained from Definition \ref{aprc} by changing $x$ to 
$\overrightarrow{x}$.

In a similar way we modify 
the definition of the $\mathcal{A}$-probabilistic 
prediction neighborhood.
Again we consider a two-stage process: first, a set of $n$-bit strings 
from $\mathcal{A}$ is chosen with probability $\m(A)$. 
Second, $l$ random elements in $A$ are chosen uniformly and independently
on each other. Again, by technical reason, we assume, 
that we consider only sets whose complexity is not greater then $(l+3)n$.
The value $p_{\mathcal{A}}(y|\overrightarrow{x})$ is  
defined as the conditional probability of $y \in A$ 
under the condition [the output of this two-stage process is equal to 
$\overrightarrow{x}$]:
$$
     p_{\mathcal{A}}(y \cnd \overrightarrow{x})= \frac{\sum_{A\ni \overrightarrow{x},y} \m(A)/|A|^l}{\sum_{A\ni \overrightarrow{x}} \m(A)/|A|^l}
$$
Here both sums are taken over all sets $A\in\mathcal{A}^n$ that 
have complexity at most $n(l + 3)$. 
(If no such set contains $x$ then 
$p_{\mathcal{A}}(y \cnd \overrightarrow{x})=0$.)
By definition,
a string $y$ belongs to $\mathcal{A}$-probabilistic prediction $d$-neighborhood for $\overrightarrow{x}$ if 
$p_{\mathcal{A}}(y \cnd \overrightarrow{x}) \ge 2^{-d}$.

Now we are ready to state an analog of Theorem \ref{rc}:
\begin{theorem}
\label{mlss}
Let $\mathcal{A}$ be a decidable family of binary strings. Then:

\textup{(a)} For every $l$ $n$-bit strings $\overrightarrow{x}$ and for every $d$ the $\mathcal{A}$-algorithmic prediction $d$-neighborhood is contained in $\mathcal{A}$-probabilistic prediction $d + O(\log (n+l))$-neighborhood of $\overrightarrow{x}$.

\textup{(b)} For every $l$ $n$-bit strings $\overrightarrow{x}$ and for every $d$ the $\mathcal{A}$-probabilistic prediction $d$-neighborhood of $\overrightarrow{x}$ is contained in $\mathcal{A}$-algorithmic prediction $d + O(\log(n+l))$-neighborhood of $\overrightarrow{x}$.
\end{theorem}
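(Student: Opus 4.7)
The plan is to mirror the proofs of Theorem~\ref{rc} (the single-string case with arbitrary $\mathcal{A}$), making two systematic adjustments: every $|A|$ is replaced by $|A|^l$ (so $\log|A|$ becomes $l\log|A|$), and the bipartite graphs appearing through Corollary~\ref{cfl} have $l$-tuples of $n$-bit strings as right nodes, so the effective $\log|R|$ in Lemma~\ref{ml} is $nl$ rather than $n$. Throughout, the key preliminary is the bound
\[
\delta_{\min}(\overrightarrow{x}) := \min\{\delta(\overrightarrow{x},B) \cnd \overrightarrow{x}\in B\in \mathcal{A}^n\} \le n + O(\log(n+l)),
\]
obtained by applying $\delta$ to the first set in $\mathcal A^n$ containing $\overrightarrow{x}$ (its complexity is at most $C(\overrightarrow{x})+O(\log(n+l))\le ln+O(\log(n+l))$, its log-size at most $n$). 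In particular, any set whose deficiency is within $O(n)$ of $\delta_{\min}$ has complexity at most $(l+3)n$ and so contributes to the sums defining $p_{\mathcal A}(y\cnd\overrightarrow x)$.

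For part~(a), given a witness $A\in\mathcal A^n$ with $\overrightarrow x,y\in A$ and $\delta(\overrightarrow x,A)\le\delta_{\min}(\overrightarrow x)+d$, I would first ensure a witness of complexity at most $(l+3)n$: if the given $A$ fails this, substitute the first $A'\in\mathcal A^n$ containing $\overrightarrow x,y$, which satisfies $C(A')\le(l+1)n+O(\log(n+l))$ and $\delta(\overrightarrow x,A')\le n+O(\log(n+l))$, and is a valid witness for any $d\ge n+O(\log(n+l))$ (whereas for smaller $d$ the original $A$ already has complexity at most $(l+3)n$ by the preliminary bound). Keeping only the $A$-term in the numerator of $p_{\mathcal A}(y\cnd\overrightarrow x)$ and bounding the denominator by its maximal term $\m(A_x^*)/|A_x^*|^l$ via the generalized Corollary~\ref{cfl} (below) yields
\[
p_{\mathcal A}(y\cnd\overrightarrow x)\ge 2^{-\delta(\overrightarrow x,A)+\delta(\overrightarrow x,A_x^*)-O(\log(n+l))}\ge 2^{-d-O(\log(n+l))},
\]
since $\delta(\overrightarrow x,A_x^*)\ge\delta_{\min}(\overrightarrow x)$ while $\delta(\overrightarrow x,A)\le\delta_{\min}(\overrightarrow x)+d$.

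For part~(b), I would apply the generalized Corollary~\ref{cfl} to both sums in the ratio $p_{\mathcal A}(y\cnd\overrightarrow x)\ge 2^{-d}$, replacing each by its maximal term $\m(A_{xy}^*)/|A_{xy}^*|^l$ and $\m(A_x^*)/|A_x^*|^l$ respectively (losing only a $2^{O(\log(n+l))}$ factor). Taking $-\log_2$ converts this into
\[
\delta(\overrightarrow x,A_{xy}^*)\le\delta(\overrightarrow x,A_x^*)+d+O(\log(n+l)).
\]
The preliminary bound on $\delta_{\min}(\overrightarrow x)$ guarantees that a true $\delta$-minimizer has complexity at most $(l+3)n$, hence $\delta(\overrightarrow x,A_x^*)\le\delta_{\min}(\overrightarrow x)+O(\log(n+l))$, and so $A_{xy}^*$ itself witnesses that $y$ lies in the $\mathcal A$-algorithmic $(d+O(\log(n+l)))$-neighborhood of $\overrightarrow x$.

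The main obstacle — and the only step requiring real work — is the generalization of Corollary~\ref{cfl} from sums $\sum\m(A)/|A|$ to sums $\sum\m(A)/|A|^l$. The proof extends almost verbatim: split the sum according to $(C(A),\log|A|)=(i,j)$; within each slice every term equals $2^{-i-lj}$ up to a $2^{O(\log(n+m))}$ factor, so if $2^k$ such sets contain $\overrightarrow x$ the slice sum is $\approx 2^{k-i-lj}$. Apply Lemma~\ref{ml} to the bipartite graph whose left nodes are subsets of $\mathbb B^n$ of cardinality at most $2^j$ and whose right nodes are $l$-tuples in $(\mathbb B^n)^l$ (so the parameter $n$ in the lemma is replaced by $nl$); this produces a set $H\in\mathcal A^n$ of log-size at most $j$ and complexity at most $i-k+O(\log(i+j+k+n+l))$ containing $\overrightarrow x$. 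Its contribution $\m(H)/|H|^l\approx 2^{-(i-k)-lj}$ matches the slice sum within the claimed $2^{O(\log(n+m+l))}$ slack, and summing the $O(nm)$ slices completes the generalization.
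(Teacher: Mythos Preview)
Your approach matches the paper's: it too says the proof is ``entirely similar to the proof of Theorem~\ref{rc}, but now Corollary~\ref{cfl} is applied for $l$ and $l+1$ strings.'' You are in fact more explicit than the paper in two respects. First, you spell out that Corollary~\ref{cfl} must be upgraded from sums $\sum \m(A)/|A|$ to sums $\sum \m(A)/|A|^l$; the paper leaves this implicit, but your sketch of the extension (split by $(i,j)$, apply Lemma~\ref{ml} with right nodes $(\mathbb B^n)^l$, and observe that each slice term becomes $2^{-i-lj}$) is exactly the right adaptation. Second, you explicitly use the generalized corollary on the \emph{denominator} in part~(a); this is indeed needed in the general setting of Definition~\ref{aprc}, since without singletons one cannot simply replace the denominator by $\m(\overrightarrow{x})$.

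One arithmetic slip to flag: your preliminary bound $\delta_{\min}(\overrightarrow{x})\le n+O(\log(n+l))$ is off by a factor of $l$. With $C(B_0)\le C(\overrightarrow{x})+O(\log(n+l))$ and $\log|B_0|\le n$ one gets
\[
\delta(\overrightarrow{x},B_0)=C(B_0)+l\log|B_0|-C(\overrightarrow{x})\le ln+O(\log(n+l)),
\]
not $n+O(\log(n+l))$. This weakens your subsequent claim that ``any set whose deficiency is within $O(n)$ of $\delta_{\min}$ has complexity at most $(l+3)n$'' and the threshold in your case split for part~(a). The overall strategy is unaffected, but the bookkeeping around the complexity cutoff $(l+3)n$ needs to be redone with the corrected bound.
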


\begin{proof}
The proof is entirely similar to the proof of Theorem \ref{rc}, 
but now Corollary \ref{cfl} is applied  for $l$ and $l+1$ strings so 
the accuracy becomes $O(\log(n+l))$.
\end{proof}
\section{Non-uniform probability distributions}\label{s3}
We have considered so far only uniform probability distributions
as statistical hypotheses. The paper~\cite[Appendix II]{VerVit2002}
justifies such a restriction: it was observed there that 
for every data string $x$ and for probability distribution $P$   
there is a finite set $A\ni x$ that is not worse than
$P$ as an explanation for $x$ (with logarithmic accuracy).
However, if the data consists of more than one string,
then this is not the case. Now, we will explain this in more details. 

The quality of a probability distribution $P$ as an explanation for
the data $x_1,\ldots,x_l$ is measured be the following two parameters:
\begin{itemize}
\item the complexity $C(P)$ of the distribution $P$,
\item $-\log(P(x_1)\dots P(x_l))$ (the smaller this parameter
is the larger is the likelihood to get the tuple $\overrightarrow{x}$
by independently drawing  $l$ strings with respect to $P$).
\end{itemize}

We consider only distributions over finite sets such 
that the probability of every outcome is a rational number.
The complexity of such a distribution is defined as the complexity
of the set of all pairs $\pair{y,P(y)}$ ordered lexicographically.

If $P$ is a uniform distribution over a finite set $A$
then the first parameter becomes $C(A)$ and the second one
becomes $-l\log|A|$. If $l=1$ then for every
pair $x,P$ there is a finite set  $A\ni x$ such that 
both $C(A),\log|A|$ are at most $C(P),-\log P(x)$
with the accuracy $O(\log|x|)$. 
Indeed, let $A=\mathbb B^n$ if $P(x)\ge2^{-n}$ and
$$
A=\{x\in\mathbb B^n\mid P(x)\ge 2^{-i}\}
$$ 
if 
$2^{-i}\le P(x)<2^{-i+1}\le 2^{-n}$. In both cases 
we have $C(A)\le C(P)+O(\log n)$
and $\log|A|\le -\log P(x)+1$.

For $l=2$ this is not the case:
\begin{example}
Let $x_1$ be a random string of length $2n$
and $x_2=00\dots 0y$ be a string of length $2n$
where  $y$ is a random string of length $n$
independent of $x_1$ (that is, $C(x_1,x_2)=3n+O(1)$).
A plausible explanation of such data is the following:
the strings $x_1,x_2$ were drawn independently 
with the respect the distribution $P$ where
half of the probability is uniformly 
distributed over all strings of length $2n$
and the remaining half is  uniformly 
distributed over all strings of length $2n$ starting with $n$
zeros. The complexity of this distribution $P$
is negligible ($O(\log n)$) and the second parameter $-\log(P(x_1)P(x_2))$
is about $3n$. On the other hand there is no simple 
set $A$ containing both strings $x_1,x_2$ with
$2\log|A|$ being close to  $3n$. Indeed, for 
every set $A$ containing $x_1$ we have 
$C(A)+\log|A|\ge 3n-O(\log n)$ and
hence $2\log|A|\ge 6n-2C(A)-O(\log n)\gg 3n$ (the last inequality holds
provided $C(A)$ is small).
\end{example}

Therefore we will not restrict the class of statistical
hypotheses to uniform distributions. We will show
that the main result of ~\cite{VerVit2002} 
(Theorem~\ref{profiles_opt_stoch} below) 
translates to the case of several strings, i.e., to the case $l>1$
(Theorem~\ref{profiles_opt_stoch_many} below).

\subsection{The profile of a tuple $x_1,\dots,x_l$}

Fix $x_1,\dots,x_l\in\mathbb B^n$. As above,
we will denote  by $\overrightarrow{x}$
the tuple  $x_1,\dots,x_l$. The optimality deficiency is defined
by the formula 
$$
\delta(\overrightarrow{x},P)=C(P)-\log(P(x_1)\dots P(x_l))-
C(\overrightarrow{x}).
$$
This value is non-negative up to $O(\log(n+l))$,
since given $P$ and $l$ we can describe 
the tuple $\overrightarrow{x}$ in  
$-\log(P(x_1)\dots P(x_l))+O(1)$ bits, using the Shannon-Fano code.

\begin{definition}
The profile $P_{\overrightarrow{x}}$ of the tuple 
$\overrightarrow{x}$ is defined
as the set of all pairs $\pair{a,b}$ of naturals 
such that there is a probability distribution $P$ of Kolmogorov
complexity at most $a$ with
$\delta(\overrightarrow{x},P)\le b$.
\end{definition}

Loosely speaking,
a tuple of strings $\overrightarrow{x}$ is called \emph{stochastic} 
if there is a simple distribution $P$ such that 
$\delta(\overrightarrow{x}, P) \approx 0$. 
In other words, if $(a,b) \in P_{\overrightarrow{x}}$ for $a, b \approx 0$. 
Otherwise it is called \emph{non-stochastic}.  
In one-dimensional case non-stochastic objects were studied, for example, 
in \cite{Sh}, \cite{VerVit2002}. However, 
in the one-dimensional case we can not present explicitly a 
non-stochastic object. In the two-dimensional case 
the situation is quite different: 
let $x_1$ be a random string of length $n$ and let $x_2=x_1$.
For such pair $x_1,x_2$ there is no simple distribution 
$P$ with small $\delta(\pair{x_1,x_2},P)$. 
Indeed, for any probability distribution $P$ we have
$C(P) - \log P(x_i) \ge C(x_i)=n$ for $i=1,2$ (with accuracy $O(\log n)$).
Adding these inequalities we get
$$
2C(P) - \log (P(x_1)P(x_2)) \ge 2n.
$$
Hence
$\delta(\pair{x_1,x_2},P)\ge 2n-C(P)-C(x_1,x_2)=n-C(P)$,
which is very large provided $C(P)\ll n$.

In general, if strings $x_1$ and $x_2$ have much 
\emph{common information} (i. e. $C(x_1, x_2) \ll C(x_1) + C(x_2)$), 
then the pair $\pair{x_1, x_2}$ is non-stochastic. 
%Formally for strings $x_1$ and $x_2$ the common information $I(x_1:x_2)$ is defined as $C(x_2) - C(x_2|x_1)$. By symmetry of information (theorem of Kolmogorov and Levin \cite{SUV}) $I(x_1:x_2)$ is equal to $I(x_2:x_1)$ with up to $O( \log(C(x_1,x_2)))$. 
There is also a non-explicit
example of a non-stochastic pair of strings: consider any pair 
whose first term is non-stochastic. 
There is no good explanation for the first term, 
hence there is no good explanation for the whole pair.

The first example suggests
the following question: is the profile of the pair of strings $x_1, x_2$ 
determined by $C(x_1),C(x_2),C(x_1,x_x)$, $P_{x_1}$, $P_{x_2}$ 
and $P_{[x_1,x_2]}$? Here
$[x_1,x_2]$ denotes  the concatenation of strings $x_1$ and $x_2$. 
Notice that $P_{[x_1,x_2]}$ denotes the 1-dimensional profile of the string
$[x_1,x_2]$ and is not to be confused with
$P_{x_1,x_2}$, which is the 2-dimensional profile of the pair of strings
$x_1,x_2$. 
The following theorem is the main result of Section~\ref{s3}.
It provides a negative answer to this question.

\begin{theorem}
\label{profile_common_inf}
For every $n$ there are strings $x_1$, $x_2$, $y_1$ and $y_2$ of length 
$2n$ such that:

1) The sets $P_{x_1}$ and $P_{y_1}$, $P_{x_2}$ and $P_{y_2}$, $P_{[x_1,x_2]}$ and $P_{[y_1, y_2]}$ are at most $O(\log n)$ apart.
 
2) $C(x_1)=C(y_1)+ O(\log n)$,   $C(x_2)=C(y_2)+ O(\log n)$,  
 $C(x_1,x_2)=C(y_1,y_2)+ O(\log n)$.

3) However the distance between $P_{x_1, x_2}$ and $P_{y_1,y_2}$ is greater than 
$0.5n-O(\log n)$. 
(We  say  that  the  distance  between  two  sets
$R$ and $Q$ is  at  most
$\epsilon$ if $R$ is contained in
$\epsilon$-neighborhood, with respect to $L_{\infty}$-norm, of $Q$,
and vice versa.)
\end{theorem}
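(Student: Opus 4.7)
The plan is to construct the two pairs to share $\approx n$ bits of common information through two different mechanisms: an explicit shared substring for $(x_1,x_2)$, and a generic XOR-difference for $(y_1,y_2)$. Both mechanisms are recorded in the marginal profiles and in the 1D profile of the concatenation (which allow arbitrary distributions on $\mathbb B^{4n}$), but only the explicit one yields a simple IID distribution on $\mathbb B^{2n}$ that explains both strings, so only the first pair has a low-complexity low-deficiency point in its 2D profile.

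For the first pair I would take mutually independent uniform strings $u,v_1,v_2\in\mathbb B^n$ and set $x_1=uv_1$, $x_2=uv_2$; then $C(x_i)=2n$, $C(x_1,x_2)=3n$, and the uniform distribution on the cylinder $u\cdot\mathbb B^n$ has complexity $n+O(\log n)$ and assigns probability $2^{-n}$ to each $x_i$, giving $(n+O(\log n),O(\log n))\in P_{x_1,x_2}$. For the second pair I would pick a stochastic string $s\in\mathbb B^{2n}$ with $C(s)=n$ lying in a simple set $S$ of size $2^n$, and pick $y_1\in\mathbb B^{2n}$ uniform and independent of $s$ such that $(y_1,y_1\oplus s)$ is contained in no simple $A\subseteq\mathbb B^{2n}$ with $C(A)\le n+O(\log n)$ and $|A|\le 2^{1.25n}$; existence follows by counting (at most $2^{n+O(\log n)}$ such $A$, each with $\sum_{s\in S}|A\cap(A+s)|\approx |S|\cdot|A|^2/2^{2n}$ ``bad'' incidences, giving a total fraction of bad $(y_1,s)$-pairs of $2^{-0.5n+O(\log n)}$). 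Set $y_2=y_1\oplus s$. Then the complexities match, each $y_i$ is marginally uniform so the marginal 1D profiles agree, and the distribution on $\mathbb B^{4n}$ uniform on $\{(z,z\oplus s'):z\in\mathbb B^{2n},s'\in S\}$ has complexity $O(\log n)$ and support size $2^{3n}$, witnessing the same point $(O(\log n),O(\log n))$ in $P_{[y_1,y_2]}$ as the ``abac''-witness gives in $P_{[x_1,x_2]}$.

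The 2D-profile separation then follows: any point $(a,b)\in P_{y_1,y_2}$ with $a\le n+O(\log n)$ and $b\le 0.5n-O(\log n)$ would be witnessed by a distribution $P$ whose essential support $\{z:P(z)\ge 2^{-1.25n}\}$ is a simple set of complexity $\le n+O(\log n)$ and size $\le 2^{1.25n+O(\log n)}$ containing both $y_1$ and $y_2$, contradicting the choice of $y_1$. Hence the $L_\infty$-distance from $(n+O(\log n),O(\log n))\in P_{x_1,x_2}$ to $P_{y_1,y_2}$ is at least $0.5n-O(\log n)$, as required.

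The main technical obstacle will be the distribution-to-set reduction---extracting from a low-complexity, low-deficiency distributional witness an explicit simple set of comparable complexity and small size containing both $y_1$ and $y_2$---together with careful bookkeeping of the various $O(\log n)$ losses. Lemma~\ref{ml}, applied to the bipartite graph whose left nodes are simple low-complexity subsets of $\mathbb B^{2n}$ and whose right nodes are pairs of strings of the form $(y_1,y_1\oplus s)$, is the natural tool, used exactly as in the proof of Theorem~\ref{mt}(b).
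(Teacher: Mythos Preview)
Your construction of $(x_1,x_2)$ coincides with the paper's, and items 1) and 2) of the theorem are handled correctly. The gap is in item 3), specifically in the argument that $P_{y_1,y_2}$ avoids the region near $(n,0)$.

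The counting estimate $\sum_{s\in S}|A\cap(A\oplus s)|\approx |S|\cdot|A|^2/2^{2n}$ is an \emph{average} over all shifts $s\in\mathbb B^{2n}$, not over $s\in S$; for a fixed simple $S$ and a set $A$ of complexity $\le n$ there is no reason it should hold. Worse, for the most natural choice $S=0^n\mathbb B^n$ your construction outright fails: writing $y_1=pq$ with $p,q\in\mathbb B^n$ and $s=0^nw$, one gets $y_2=p(q\oplus w)$, and the uniform distribution on $p\,\mathbb B^n$ has complexity $n+O(\log n)$ and gives $\delta((y_1,y_2),P)=O(\log n)$, so $(n,0)\in P_{y_1,y_2}$ up to $O(\log n)$. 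Thus the XOR mechanism with a simple $S$ does \emph{not} in general produce a 2D-nonstochastic pair, and no first-moment argument over simple sets $A$ can rescue it without a much more careful choice of $S$ and a genuinely structural lower bound. (There is also a secondary issue: from $-\log P(y_1)-\log P(y_2)\le 2.5n$ you only get that each $-\log P(y_i)\le 1.5n$, not $\le 1.25n$, so both $y_i$ lie in a level set of size up to $2^{1.5n}$; with that size even your own heuristic gives a bad-fraction of order $1$.)

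The paper avoids all of this by taking $(y_1,y_2)$ to be a random incident line--point pair over $\mathbb F_{2^n}$ and invoking the theorem of Chernov, Muchnik, Romashchenko, Shen and Vereshchagin on non-extractable common information. That theorem yields, for \emph{every} $z$, a lower bound of the form $C(z)+C(y_1|z)+C(y_2|z)\ge \min\{4n-C(z)/3,\,5n-C(z)\}-O(\log n)$, which via the inequality $\delta((y_1,y_2),P)\ge C(P)+C(y_1|P)+C(y_2|P)-3n$ immediately excludes the point $(1.5n,\,0.5n-O(\log n))$ from $P_{y_1,y_2}$. This is precisely the kind of ``for all $z$'' statement that a union bound over simple sets cannot deliver, and it is why the paper imports a nontrivial external result rather than counting.
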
   

The proof of this theorem is presented in Appendix.

\subsection{Randomness deficiency}

In this subsection we introduce multi-dimensional randomness deficiency
and show that the main result of~\cite{VerVit2002} relating  
1-dimensional randomness deficiency and optimality deficiency
translates to any number of strings.

The 1-dimensional randomness deficiency of a string
$x$ in a finite set $A$ was defined by Kolmogorov
as  $d(x|A)=\log|A| - C(x|A)$. It is always non-negative 
(with $O(\log|x|)$ accuracy), as 
we can find $x$ from $A$ and the index of $x$ in $A$.  
For most elements $x$ in any set $A$ the randomness deficiency 
of $x$ in $A$ is negligible.
More specifically, the fraction of $x$ in $A$
with randomness deficiency greater than $\beta$ is less than 
$2^{-\beta}$. The randomness deficiency measures how non-typical
looks $x$ in $A$.   

%Let us consider parameters $C(A), d(x|A)$ for all $A \ni x$:

\begin{definition}
The set of all pairs $(a,b)$ such that there is a set $A \ni x$ of 
complexity at most $a$ and $d(x|A) \le b$ is called the \emph{stochasticity profile} of $x$ and is denoted by $Q_x$
\end{definition}

To distinguish  profiles $P_x$ and $Q_x$ we will call $P_x$ the  \emph{optimality profile} in the sequel. 
Surprisingly, the sets $P_x$ and $Q_x$ almost coincide:
\begin{theorem}[\cite{VerVit2002}]
\label{profiles_opt_stoch}
For every string $x$ of length $n$ the distance between $P_x$ and $Q_x$ is at
most $O(\log n)$.
\end{theorem}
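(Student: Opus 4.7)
The plan is to establish both inclusions $P_x \subseteq Q_x + O(\log n)$ and $Q_x \subseteq P_x + O(\log n)$, where the addition of $O(\log n)$ refers to widening in both coordinates. The central tool is the Kolmogorov--Levin symmetry of information $C(A) + C(x|A) = C(x) + C(A|x) + O(\log n)$, from which direct manipulation yields the key identity
\[
\delta(x, A) = d(x|A) + C(A|x) + O(\log n),
\]
connecting the two deficiencies for any set $A \ni x$.

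The inclusion $P_x \subseteq Q_x + O(\log n)$ is immediate from this identity (using $C(A|x) \ge 0$): if $A$ witnesses $(a, b) \in P_x$, then the same set satisfies $d(x|A) \le \delta(x, A) + O(\log n) \le b + O(\log n)$, so it also witnesses $(a, b + O(\log n)) \in Q_x$.

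For the reverse inclusion, suppose $A$ witnesses $(a, b) \in Q_x$. The identity above gives $\delta(x, A) \le b + C(A|x) + O(\log n)$, so the task reduces to replacing $A$ by a set $A'$ with $C(A') \le a + O(\log n)$, $\log|A'| \le \log|A| + O(\log n)$, $d(x|A') \le b + O(\log n)$, and additionally $C(A'|x) = O(\log n)$; then the identity forces $\delta(x, A') \le b + O(\log n)$. The candidate for $A'$ is the first set appearing in a canonical recursive enumeration, given $x$ and the $O(\log n)$-bit parameters $a, \log|A|$, of all subsets $B \subseteq \mathbb{B}^n$ with $C(B) \le a$, $|B| \le 2^{\log|A|}$, and $x \in B$. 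By construction $C(A') \le a$, $\log|A'| \le \log|A|$, and $C(A'|x) = O(\log n)$.

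The main obstacle is to ensure that $A'$ also inherits the small randomness deficiency $d(x|A') \le b + O(\log n)$, since a priori the first set on the list could contain $x$ in an anomalously low-complexity position. Because $d(x|B) \le b$ is a lower bound on $C(x|B)$ and hence not directly r.e., I would address this by combining the enumeration with the counting machinery of Lemma~\ref{ml} and Corollary~\ref{cfl}: applied to the bipartite graph whose left vertices are the subsets $B$ above and whose right vertices are $n$-bit strings, these tools show that the sum $\sum_{B \ni x,\, C(B) \le a} \m(B)/|B|$ is concentrated on a single dominant term of magnitude at least $\m(A)/|A| / 2^{O(\log n)}$, and that a canonical maximizer can be chosen with small complexity relative to $x$. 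The technical heart of the proof is verifying that this concentration really extracts a set in which $x$ lies typically, rather than one in which $x$ is simpler than in $A$, so that all four parameters $C(A')$, $\log|A'|$, $d(x|A')$, and $C(A'|x)$ are simultaneously controlled within $O(\log n)$ of their targets.
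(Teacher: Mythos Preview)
Your setup is correct: the identity $\delta(x,A)=d(x|A)+C(A|x)+O(\log n)$ via symmetry of information is exactly the starting point, and the easy inclusion $P_x\subseteq Q_x+O(\log n)$ follows immediately as you say. The gap is in the hard direction.

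Your plan there is to find $A'$ with $C(A')\le a$, $\log|A'|\le\log|A|$, $C(A'|x)=O(\log n)$, \emph{and} $d(x|A')\le b+O(\log n)$. The first three you can get from ``first set in the enumeration'', but, as you note, the fourth need not hold; and your fallback to Corollary~\ref{cfl} does not fix this. If $A'$ is a maximizer of $\m(B)/|B|$ over $B\ni x$ with $C(B)\le a$, all you get is $C(A')+\log|A'|\le C(A)+\log|A|+O(\log n)$, hence $\delta(x,A')\le\delta(x,A)+O(\log n)=d(x|A)+C(A|x)+O(\log n)$, which still carries the unwanted $C(A|x)$ term. Trying to simultaneously pin down all four parameters is the wrong target.

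The paper (via the $l=1$ case of the proof of Theorem~\ref{profiles_opt_stoch_many}, i.e.\ Lemmas~\ref{better} and~\ref{manytosimple}) takes a different route that avoids controlling $d(x|A')$ altogether. One counts the sets $B$ with $C(B)\le a$, $\log|B|\le\log|A|$, $x\in B$: since $A$ can be recovered from $x$ and its index in this list, there are at least $2^{C(A|x)}$ of them. Lemma~\ref{ml} then yields such a $B=\tilde A$ with \emph{reduced} complexity $C(\tilde A)\le a-C(A|x)+O(\log n)$. Now compute $\delta$ directly:
\[
\delta(x,\tilde A)\le\bigl(a-C(A|x)\bigr)+\log|A|-C(x)+O(\log n)=\delta(x,A)-C(A|x)+O(\log n)=d(x|A)+O(\log n).
\]
So the point is not to keep $C(A')\approx a$ and separately force $d(x|A')$ small; it is to \emph{drop} the complexity by exactly $C(A|x)$, after which the bound on $\delta$ is automatic. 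Your invocation of Lemma~\ref{ml} is the right tool, but it should be used to lower the absolute complexity of the model, not to produce a model simple relative to $x$.
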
   

The multi-dimensional randomness deficiency is defined in the following way.
For a tuple of strings 
$\overrightarrow{x} = x_1, \ldots, x_l$ and a distribution $P$ 
let 
$$
d(\overrightarrow{x}|P) = - \log( P(x_1)\dots P(x_l)) - C(x_1,\ldots, x_l|P). 
$$
If $l = 1$ and $P$ is a uniform distribution in a finite set then 
this definition is equivalently to the one-dimensional case.
The randomness deficiency measures how implausible is to  
get $x_1,\dots,x_l$ as a result of $l$ independent draws from $A$.  
The set off all pairs $(a,b)$ such that there is a distribution $P$ 
of complexity at most $a$ and $d(\overrightarrow{x}|P) \le b$ 
is called the \emph{$l$-dimensional 
stochasticity profile of $\overrightarrow{x}$} and 
is denoted by $Q_{\overrightarrow{x}}$.

It turns out that Theorem \ref{profiles_opt_stoch}
translates to multi-dimensional case:
\begin{theorem}
\label{profiles_opt_stoch_many}
For every tuple $\overrightarrow{x} = x_1, \ldots, x_l $ of strings of length 
$n$ the distance between sets 
$P_{\overrightarrow{x}}$ and $Q_{\overrightarrow{x}}$ is at most $O(\log(n+l))$.
\end{theorem}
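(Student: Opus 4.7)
My plan is to mirror the proof of Theorem~\ref{profiles_opt_stoch} (the one-string case), extending it to tuples and distributions. The proof rests on the chain rule (symmetry of information) for the pair $(P,\overrightarrow{x})$, which with precision $O(\log(n+l))$ gives the key identity
\[
\delta(\overrightarrow{x},P)-d(\overrightarrow{x}|P)=C(P)+C(\overrightarrow{x}|P)-C(\overrightarrow{x})=C(P\mid\overrightarrow{x})+O(\log(n+l)).
\]

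For direction (a), this identity and the nonnegativity of $C(P\mid\overrightarrow{x})$ immediately yield $d(\overrightarrow{x}|P)\le\delta(\overrightarrow{x},P)+O(\log(n+l))$, so any distribution $P$ witnessing a pair $(a,b)\in P_{\overrightarrow{x}}$ also witnesses $(a,b+O(\log(n+l)))\in Q_{\overrightarrow{x}}$.

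For direction (b), given $P$ with $C(P)\le a$ and $d(\overrightarrow{x}|P)\le b$, the identity reduces the task to constructing an alternative distribution $P'$ with $C(P')\le a+O(\log(n+l))$ and $C(P'\mid\overrightarrow{x})=O(\log(n+l))$ while maintaining $d(\overrightarrow{x}|P')\le b+O(\log(n+l))$. Following the pattern of the proofs of Theorems~\ref{rc} and~\ref{mlss}, the construction of $P'$ proceeds via a counting argument: I would prove a distribution analog of Corollary~\ref{cfl}, stating that the sum $\sum_{Q:\,C(Q)\le a}\m(Q)\prod_{i=1}^{l}Q(x_i)$ is within a factor $2^{O(\log(n+l+a))}$ of its maximum term. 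This analog is established by partitioning the distributions into buckets indexed by $(C(Q),\lfloor-\log\prod Q(x_i)\rfloor)$ and applying a distribution version of Lemma~\ref{ml} to each bucket, using the bipartite graph whose left nodes are (programs for) rational-valued distributions on $\mathbb{B}^n$, whose right nodes are the $2^{nl}$ tuples in $(\mathbb{B}^n)^l$, and whose adjacency relation $Q\sim\overrightarrow{y}\iff\prod Q(y_i)\ge 2^{-k}$ has complexity $O(\log(n+l+k))$. The ``maximal'' distribution $P'$ in the bucket of $P$, identified through enumeration given $\overrightarrow{x}$ and the bucket indices, satisfies $C(P'\mid\overrightarrow{x})=O(\log(n+l))$, while bucket membership controls $C(P')$ and $-\log\prod P'(x_i)$.

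The main obstacle is twofold. First is the distribution analog of Lemma~\ref{ml}: the left-node family is countably infinite (rational-valued distributions on $\mathbb{B}^n$), but the ``model graph'' reduction appearing in the second half of the paper's proof of Lemma~\ref{ml} transfers verbatim because only the graph's complexity $O(\log(n+l+k))$ and the right-node count $2^{nl}$ enter the argument. The second, subtler, obstacle is verifying that the chosen $P'$ satisfies the randomness-deficiency bound $d(\overrightarrow{x}|P')\le b+O(\log(n+l))$: the naive maximum-term-equals-sum inequality yields only $\delta(\overrightarrow{x},P')\le a+b+O(\log)$, so closing the gap to $\delta(\overrightarrow{x},P')\le b+O(\log(n+l))$ requires exploiting the full complexity savings granted by Lemma~\ref{ml}, namely that the bucket containing $P$ has cardinality essentially $2^{C(P\mid\overrightarrow{x})}$ up to logarithmic corrections, so that the lemma lowers $C(P')$ by precisely the $C(P\mid\overrightarrow{x})$ slack appearing in the chain-rule identity.
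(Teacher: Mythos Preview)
Your proposal is correct and follows essentially the same route as the paper: the chain-rule identity $\delta(\overrightarrow{x},P)-d(\overrightarrow{x}|P)=C(P\mid\overrightarrow{x})$ handles direction~(a) immediately, and direction~(b) is obtained by applying Lemma~\ref{ml} (with left nodes being distributions and right nodes being $l$-tuples) to the family of distributions $Q$ with $C(Q)\le C(P)$ and $-\log\prod_i Q(x_i)\le -\log\prod_i P(x_i)$, whose cardinality is at least $2^{C(P\mid\overrightarrow{x})}$ because $P$ is recoverable from $\overrightarrow{x}$ and its ordinal in that enumeration.

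The only difference is packaging. The paper states this last step as two short lemmas (Lemma~\ref{manytosimple}, the distribution version of Lemma~\ref{ml}, and Lemma~\ref{better}, which draws the conclusion $C(\tilde P)\le C(P)-C(P\mid\overrightarrow{x})$ and $-\log\prod_i\tilde P(x_i)\le -\log\prod_i P(x_i)$), and then computes $\delta(\overrightarrow{x},\tilde P)\le d(\overrightarrow{x}|P)$ directly from these two inequalities. Your detour through a ``distribution analog of Corollary~\ref{cfl}'' and the intermediate target $C(P'\mid\overrightarrow{x})=O(\log(n+l))$ is not needed: once Lemma~\ref{ml} gives you $\tilde P$ with complexity lowered by $C(P\mid\overrightarrow{x})$ and the same log-likelihood bucket, the bound $\delta(\overrightarrow{x},\tilde P)\le b+O(\log(n+l))$ follows in one line, and your ``second, subtler obstacle'' disappears.
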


The proof of this theorem is presented in Appendix.

\begin{remark}
Theorem~\ref{profiles_opt_stoch_many}
is basically an analog of Theorem \ref{profiles_opt_stoch}
for a restricted class of distributions, 
namely, for product distributions $Q$ on 
$l$-tuples, i.e., distributions of the form 
$Q(x_1,\dots,x_l)=P(x_1)\cdots P(x_l)$. 
A natural question is whether Theorem \ref{profiles_opt_stoch}
can be generalized to any decidable class of distributions.
This is indeed the case and the proof
is very similar to the proof of  Theorem~\ref{profiles_opt_stoch_many}
(presented in Appendix). 
\end{remark}

\section*{An open question}
Can we improve the accuracy in  Corollary \ref{cfl} 
from $2^{O(\log( n + m + l))}$ to  $2^{O(\log (n + l))}$?

\section*{Acknowledgments}

I would like to thank Julia Marakshina, Alexander Shen and 

Nikolay 
Vereshchagin for help in writing this paper.

\section{Appendix}

\begin{proof}[Proof of Theorem \ref{profile_common_inf}]
Our example is borrowed from~\cite{Chernov},
where there are several examples of pairs of strings with non-extractable
common information. All of the examples except one are stochastic
pairs of strings and we need any stochastic such example. 

Consider a finite field $\mathbb{F}$ of cardinality $2^n$ and 
a plane (two-dimensional vector space) over $\mathbb{F}$.
Let $y_1$ be a random line on this  plane, and $y_2$ be a random point on 
this line. Then
$$C(y_1) = 2n, C(y_2) = 2n, C(y_1,y_2) = 3n
$$
(everything with logarithmic accuracy).
These strings $y_1,y_2$ have about $n$ bits of common information.
On the other hand~\cite[Theorem 8]{Chernov} 
states the following:
\begin{theorem}[\cite{Chernov}]  \label{th1}
There is no $z$ such that $C(z)=n+O(\log n)$,  
$C(y_1|z)=n+O(\log n)$, $C(y_2|z)=n+O(\log n)$
(such a string $z$ could be considered as a representation
of the common information in $y_1,y_2$). 
Moreover, for all strings $z$ we have
 \begin{align}\label{eq2}
C(z)+C(y_1|z)/2+\max\{C(y_1|z)/2,C(y_2|z)\}&\ge 3n-O(\log n),\\
C(z)+C(y_2|z)/2+\max\{C(y_2|z)/2,C(y_1|z)\}&\ge 3n-O(\log n).\label{eq3}
\end{align}
\end{theorem}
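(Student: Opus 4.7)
The plan is to prove the two displayed inequalities by a point–line incidence argument over $\mathbb{F}=\mathbb{F}_{2^n}$; the non-existence claim then follows immediately by substituting $a=b=c=n$ into either inequality, since $n+n/2+n=5n/2$ is strictly less than $3n-O(\log n)$ for large $n$. Throughout, I identify $y_1$ with a line and $y_2$ with a point of the affine plane $\mathbb{F}^2$, and write $a=C(y_1\mid z)$, $b=C(y_2\mid z)$, $c=C(z)$.

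The geometric core is the elementary Cauchy–Schwarz incidence bound: for any finite point set $P$ and finite line set $L$ in the plane, the incidence count $I(P,L)=|\{(p,\ell):p\in P,\ \ell\in L,\ p\in\ell\}|$ satisfies both $I(P,L)\le |L|+|P|\sqrt{|L|}$ and $I(P,L)\le |P|+|L|\sqrt{|P|}$. Each follows in one line from Cauchy–Schwarz applied to $\sum_{\ell}|\ell\cap P|$ (respectively $\sum_{p}|\{\ell\in L:p\in\ell\}|$) combined with the fact that two distinct points lie on at most one line (respectively two distinct lines meet in at most one point).

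To turn this into the complexity bound, let $L_z$ be the set of lines $\ell$ in $\mathbb{F}^2$ with $C(\ell\mid z)\le a$ and $P_z$ the set of points $p$ with $C(p\mid z)\le b$. These sets are uniformly enumerable from $z$ together with $O(\log n)$ bits specifying $a$, $b$, and $n$, and by the standard counting argument $|L_z|\le 2^{a+1}$, $|P_z|\le 2^{b+1}$. Since $y_1\in L_z$ and $y_2\in P_z$, the pair $(y_2,y_1)$ is one among at most $I(P_z,L_z)$ incidences, and specifying its index in the enumeration yields
$$C(y_1,y_2\mid z)\le \min\bigl\{\max(a,b+\tfrac{a}{2}),\ \max(b,a+\tfrac{b}{2})\bigr\}+O(\log n).$$
Combining with the chain rule $C(y_1,y_2)\le c+C(y_1,y_2\mid z)+O(\log n)$, the hypothesis $C(y_1,y_2)=3n+O(\log n)$, and the algebraic identity $\max(a,b+a/2)=\tfrac{a}{2}+\max(\tfrac{a}{2},b)$, one obtains both displayed inequalities simultaneously.

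The one delicate step is the bookkeeping of logarithmic overhead in the enumeration: the description of $(y_1,y_2)$ from $z$ costs $\log I(P_z,L_z)$ bits for the index plus $O(\log n)$ bits for specifying $a,b,n$, the ambient field, and the enumeration procedure, and one must verify that this overhead stays $O(\log n)$ so that it does not erode the $3n$ lower bound. The geometric estimate itself is a one-line Cauchy–Schwarz calculation using only the defining incidence axioms of the plane, so I expect no substantive obstacle beyond careful accounting of the logarithmic terms.
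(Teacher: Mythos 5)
The paper does not actually prove this statement---it is imported verbatim from \cite{Chernov} (Theorem 8 there)---so there is no in-paper proof to compare against. Your argument is correct and is, in essence, the standard proof of that cited result: bound the incidences between the at most $2^{a+1}$ lines of conditional complexity at most $a$ and the at most $2^{b+1}$ points of conditional complexity at most $b$ by the Cauchy--Schwarz estimate $I\le\min\bigl\{|L|+|P|\sqrt{|L|},\ |P|+|L|\sqrt{|P|}\bigr\}$, describe the pair $(y_1,y_2)$ given $z$ by its index in an enumeration of these incidences, and combine with $C(y_1,y_2)=3n+O(\log n)$; the identity $\max(a,b+a/2)=a/2+\max(a/2,b)$ then yields exactly \eqref{eq2} and (by duality) \eqref{eq3}, and the non-existence claim follows since $n+n/2+n=5n/2<3n-O(\log n)$. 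Two points you flag as "bookkeeping" are indeed the only places needing care, and both go through: (i) the incidences are enumerable from $z$ together with $a,b,n$ by interleaving the enumerations of $L_z$ and $P_z$ and outputting incident pairs as they appear, so the index of $(y_2,y_1)$ is at most $I(P_z,L_z)$ and the overhead is $O(\log n)$ because $a,b\le 2n+O(1)$; (ii) the chain rule $C(y_1,y_2)\le C(z)+C(y_1,y_2\mid z)+O(\log(\cdot))$ carries a logarithmic term depending on $C(z)$, but one may assume $C(z)\le 3n$ (otherwise both inequalities are trivial), so this too is $O(\log n)$. No gap.
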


Let us first show that inequalities~\eqref{eq2} and~\eqref{eq3}
imply that 
 \begin{align}\label{eq4}
C(z)+C(y_1|z)+C(y_2|z)\ge \min\{4n-C(z)/3,5n-C(z)\}-O(\log n).
\end{align}
Indeed, if $C(y_1|z)$ and $C(y_2|z)$ differ 
at most 2 times from each other, then
the maximum in both inequalities~\eqref{eq2} and~\eqref{eq3} is equal to the second
term and summing~\eqref{eq2} and~\eqref{eq3}
we get 
$$
2C(z)+3C(y_1|z)/2+3C(y_2|z)/2\ge 6n-O(\log n),
$$
which can be re-written
as
$$
C(z)+C(y_1|z)+C(y_2|z)\ge 4n-C(z)/3-O(\log n).
$$
Otherwise, when say $C(y_1|z)>2C(y_2|z)$, 
the maximum in inequality~\eqref{eq2} is equal to the 
first term. Then  
we sum that inequality with 
the inequality $C(z)+C(y_2|z)\ge C(y_2)=2n$ and 
obtain the inequality
$$
2C(z)+C(y_1|z)+C(y_2|z)\ge 5n-O(\log n),
$$
which can be re-written
as
$$
C(z)+C(y_1|z)+C(y_2|z)\ge 5n-C(z)-O(\log n).
$$
Thus in both cases we obtain~\eqref{eq4}.

This implies that the optimality profile $P_{y_1,y_2}$ of the pair of strings
existing by Theorem~\ref{th1} has the following property 
\begin{equation}\label{eq6}
\pair{a,b}\in P_{y_1,y_2}\ \Rightarrow\ b\ge \min\{n-a/3,2n-a\}-O(\log n).
\end{equation}
Indeed, for every probability distribution $P$
we have $C(y|P)\le -\log P(y)+O(1)$ and hence
\begin{align}\label{eq5}
\delta(\pair{y_1,y_2},P)\ge C(P)+C(y_1|P)+C(y_2|P)-3n-O(1).
\end{align}
Combining inequality~\eqref{eq4} for  $z=P$ 
and inequality~\eqref{eq5} we obtain~\eqref{eq6}.

Thus the optimality profile of the pair $y_1,y_2$
does not contain the pair $(1.5n,0.5n-O(\log n))$.
On the other hand, all the strings $y_1,y_2,[y_1,y_2]$ are stochastic,
that is, the sets $P_{y_1},P_{y_2},P_{,[y_1,y_2]}$ contain almost all pairs $(a,b)$
(more specifically, all pairs with $a,b\ge O(\log n)$).

It is easy to construct another pair of strings
$x_1,x_2$ that has the same properties except that 
the pair $(n+O(1),O(1))$ is inside $P_{x_1,x_2}$.
To this end let $x_1,x_2$ be random strings 
of length $2n$ that share first $n$ bits: 
$x_1=x^*x^*_1$, $x_2=x^*x^*_2$ and $C(x^*x_1^*x_2^*) = 3n + O(1)$.
Then again $C(x_1)=2n + O(1)$, $C(x_2)=2n + O(1)$, 
$C(x_1x_2) = 3n + O(1)$.
And again  all the strings $x_1,x_2,[x_1,x_2]$ are stochastic.
To show that the pair $(n+O(\log n),O(\log n))$ 
is inside $P_{x_1,x_2}$,
consider the uniform distribution $P$ on all strings of length $n$ whose
first half is equal to $x^*$. 
This distribution has the same complexity as $x^*$, that is,
$C(P)=n+O(1)$ and hence 
$C(P) -\log P(x_1) -\log P(x_2) = 3n+O(1) = C(x_1,x_2)$. Hence
even the pair $(n+O(1), O(1))$ belongs to $P_{x_1,x_2}$.
\end{proof}

\begin{proof}[Proof of Theorem \ref{profiles_opt_stoch_many}]
The proof is similar to the proof of Theorem \ref{profiles_opt_stoch}. 
First notice that for every distribution $P$ 
we have 
$d(\overrightarrow{x}| P) \le \delta(\overrightarrow{x}, P) + O(\log(n+l))$.
Indeed:
\begin{align*}
d(\overrightarrow{x}|P) &= - \log( P(x_1)\dots P(x_n)) - C(\overrightarrow{x}|P) \\
&\le - \log( P(x_1)\dots P(x_n)) + C(P) - C(\overrightarrow{x}) = \delta(\overrightarrow{x}, P).
\end{align*}
Therefore
the set $Q_{\overrightarrow{x}}$ includes the set $P_{\overrightarrow{x}}$ 
(with accuracy $O(\log(n+l))$). 
%This inequality is right because $C(\overrightarrow{x})\le C(P) + C(\overrightarrow{x}|P) + O(\log nl)$. This inequality will became the equality (with up to $O(\log nl)$) if $C(P|\overrightarrow{x}) = O(\log nl)$. 

It remains to show the inverse inclusion.
From the above inequalities it is clear 
that the difference between
$\delta(\overrightarrow{x}, P)$ and $d(\overrightarrow{x}| P)$
equals 
$$
(C(P) - C(\overrightarrow{x}))+ C(\overrightarrow{x}|P)
=C(P|\overrightarrow x),
$$
where the equality follows from the Symmetry of 
information (see, e.g.~\cite{LiVit}).

It turns out that if $C(P|\overrightarrow{x})$ 
is large then there is an explanation $\tilde P$ for
$\overrightarrow{x}$ with much better parameters:
\begin{lemma}
\label{better}
For every distribution $P$ and for every tuple $\overrightarrow{x} = x_1 \ldots x_l$  of strings  of length $n$ there is a distribution $\tilde{P}$ such that:

1) $- \log( \tilde{P}(x_1)\dots \tilde{P}(x_l)) 
\le - \log( P(x_1) \dots P(x_l)) + O(\log(n + l))$ and

2) $C(\tilde{P}) \le C(P) - C(P|\overrightarrow{x}) + O(\log(n+l))$. 
\end{lemma}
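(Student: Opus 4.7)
The plan is to deploy Lemma~\ref{ml} on a bipartite graph tailored to distributions, in direct analogy with the set-based Corollary~\ref{cfl}. Put $k_{0}=C(P)$ and $i_{0}=\lceil-\log(P(x_{1})\cdots P(x_{l}))\rceil$, and let $\mathcal{P}$ be the family of rational finite-support distributions $P'$ on $\mathbb{B}^{n}$ satisfying both $C(P')\le k_{0}$ and $-\log(P'(x_{1})\cdots P'(x_{l}))\le i_{0}$. Since Kolmogorov complexity is upper semicomputable and the log-likelihood condition is decidable for rational distributions, $\mathcal{P}$ is enumerable uniformly in $\overrightarrow{x},k_{0},i_{0}$; hence every element of $\mathcal{P}$, and in particular $P$ itself, can be specified from $\overrightarrow{x}$ by its position in this enumeration, yielding the counting bound
$$
\log|\mathcal{P}|\ge C(P\mid\overrightarrow{x})-O(\log(n+l+k_{0}+i_{0})).
$$

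Next I set up the bipartite graph $G$ whose left nodes are all rational finite-support distributions on $\mathbb{B}^{n}$ of complexity at most $k_{0}$, whose right nodes are the tuples in $(\mathbb{B}^{n})^{l}$, and in which $P'$ is adjacent to $\overrightarrow{y}$ iff $-\log(P'(y_{1})\cdots P'(y_{l}))\le i_{0}$. Each left node has at most $2^{i_{0}}$ neighbors (the $l$-fold product masses sum to at most $1$), so the adjacency list of $P'$ is computable from $P'$ and $i_{0}$, giving $C(G)=O(\log(n+l+i_{0}))$. The right node $\overrightarrow{x}$ has exactly $|\mathcal{P}|$ neighbors, each of complexity at most $k_{0}$, so applying Lemma~\ref{ml} with parameters $i=k_{0}$, $k=\lfloor\log|\mathcal{P}|\rfloor$, and with the lemma's parameter $n$ replaced by $nl$ (since $|R|\le 2^{nl}$) produces a neighbor $\tilde{P}$ of $\overrightarrow{x}$ with
$$
C(\tilde{P})\le k_{0}-\log|\mathcal{P}|+O(\log(n+l+k_{0}+i_{0}))\le C(P)-C(P\mid\overrightarrow{x})+O(\log(n+l+k_{0}+i_{0})),
$$
and, being a neighbor of $\overrightarrow{x}$, also $-\log\prod_{j}\tilde{P}(x_{j})\le i_{0}\le -\log\prod_{j}P(x_{j})+1$. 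This gives both properties of the lemma modulo the accuracy issue addressed next.

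The main obstacle I foresee is sharpening the error from $O(\log(n+l+k_{0}+i_{0}))$ to the promised $O(\log(n+l))$. I would handle this by reducing to the case $k_{0},i_{0}\le\poly(n+l)$. Let $P^{*}$ be the uniform distribution on $\{x_{1},\dots,x_{l}\}$: then $C(P^{*})\le C(\overrightarrow{x})+O(\log(n+l))\le nl+O(\log(n+l))$, $-\log\prod_{j}P^{*}(x_{j})=l\log l$, and $C(P^{*}\mid\overrightarrow{x})=O(\log l)$. If either $C(P)$ or $-\log\prod_{j}P(x_{j})$ exceeds a suitable multiple of $nl$, then $\tilde{P}:=P^{*}$ already satisfies properties (1) and (2) trivially. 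Otherwise $k_{0},i_{0}=O(nl)$ and the error in the graph argument is $O(\log(nl))=O(\log(n+l))$, as required.
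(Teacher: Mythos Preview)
Your core argument---enumerate the distributions meeting both the complexity bound $k_0$ and the likelihood bound $i_0$, lower-bound their number by $C(P\mid\overrightarrow{x})$, and feed this into Lemma~\ref{ml} on the distribution/tuple bipartite graph---is exactly the paper's route (the paper packages the graph step as a separate Lemma~\ref{manytosimple} and then applies it with $k=C(P\mid\overrightarrow{x})$).

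Your final paragraph, however, does not work as stated. The fallback $\tilde P=P^*$ need not satisfy both conditions simultaneously. If $i_0$ is huge but $C(P)-C(P\mid\overrightarrow{x})=O(1)$ (take $P$ that encodes a long random string independent of $\overrightarrow{x}$ while still assigning tiny mass to the $x_j$), condition~(2) forces $C(\tilde P)=O(\log(n+l))$, yet $C(P^*)$ can be of order $nl$; the correct fallback here is the uniform distribution on $\mathbb{B}^n$. Conversely, if $C(P)$ is huge but $i_0$ is small, $P^*$ can violate~(1): with $m$ distinct values among the $x_j$ occurring with very unequal multiplicities, $-\log\prod_jP^*(x_j)=l\log m$ may exceed the empirical-entropy lower bound $l\cdot H$ on $i_0$ by order $l$, and the uniform distribution is even worse. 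So the reduction to $k_0,i_0=O(nl)$ is not established. That said, the paper's own proof does not address this point either: its Lemma~\ref{manytosimple} carries error $O(\log(n+l+a+b))$, and the proof of the present lemma simply writes ``with logarithmic accuracy'' without bounding $a$ and $b$. Your main argument therefore matches the paper, and your accuracy discussion, while incomplete, already goes beyond what the paper supplies.
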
  

To prove this lemma we need yet another one:
\begin{lemma}
\label{manytosimple}
Let $x_1,\ldots x_l \in \mathbb{B}^n$. Assume, that  there are $2^k$ distributions $P$ such that:

1) $-\log(P(x_1)\dots P(x_l)) \le b$. 

2) $C(P) \le a$.

Then there is a distribution $\tilde{P}$ of complexity at most $a - k + O(\log( n + l + a + b))$ such that $-\log(\tilde P(x_1)\dots \tilde P(x_l)) \le b$.
\end{lemma}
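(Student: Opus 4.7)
The plan is to reduce to Lemma~\ref{ml} applied to a suitable bipartite graph. I would take the left nodes $L$ to be all finitely supported rational-valued probability distributions on $\mathbb{B}^n$ (each such $P$ is a finite object encoded by its table of pairs $\pair{y,P(y)}$, so $C(P)$ is well-defined), and the right nodes to be $R=(\mathbb{B}^n)^l$, so $|R|=2^{nl}$; this plays the role of $2^n$ in Lemma~\ref{ml}, with $nl$ substituted for the parameter $n$ there. Draw an edge between $P\in L$ and $\pair{y_1,\ldots,y_l}\in R$ iff $-\log(P(y_1)\cdots P(y_l))\le b$.

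With this setup, the hypothesis of Lemma~\ref{manytosimple} translates exactly into: the right node $\pair{x_1,\ldots,x_l}$ has at least $2^k$ neighbors of Kolmogorov complexity at most $a$. The graph itself can be described given $n$, $l$, $b$, since from any $P\in L$ one enumerates neighbors by evaluating $P(y_1)\cdots P(y_l)$ on each $l$-tuple and keeping those whose $-\log$ is at most $b$; hence $C(G)=O(\log(n+l+b))$. Applying Lemma~\ref{ml} with $i=a$ (and using $k\le a+O(1)$, since there are at most $2^{a+O(1)}$ finite objects of complexity $\le a$, so $\log k=O(\log a)$) yields a neighbor $\tilde P$ of $\pair{x_1,\ldots,x_l}$ of complexity at most
\[
a-k+O\bigl(C(G)+\log(a+k+nl)\bigr)=a-k+O(\log(n+l+a+b)).
\]
Being adjacent to $\pair{x_1,\ldots,x_l}$ means precisely $-\log(\tilde P(x_1)\cdots\tilde P(x_l))\le b$, so $\tilde P$ satisfies both required properties.

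I expect the main obstacle to be bookkeeping rather than conceptual: one must check that enumeration of neighbors of a left node is uniformly effective in $n$, $l$, $b$ (so that indeed $C(G)=O(\log(n+l+b))$), that Lemma~\ref{ml} is applied with $nl$ in place of its $n$, and that $\log k$ is absorbed into $O(\log a)$ inside the error term. No new idea seems to be needed beyond the one already used in Corollary~\ref{cfl}, which does the analogous job when the ``models'' are finite sets; here we simply swap finite sets for distributions, and the product $P(x_1)\cdots P(x_l)$ takes over the role that $1/|A|^l$ played there.
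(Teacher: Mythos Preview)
Your proposal is correct and follows exactly the paper's approach: apply Lemma~\ref{ml} with $L$ the set of probability distributions, $R=(\mathbb{B}^n)^l$, and adjacency given by $-\log(P(y_1)\cdots P(y_l))\le b$. The only (harmless) discrepancy is that you restrict $L$ to distributions supported on $\mathbb{B}^n$, whereas the $2^k$ distributions in the hypothesis may have larger support; simply taking $L$ to be all finite rational distributions, as the paper does, removes this and changes nothing else in your argument.
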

\begin{proof}[Proof of Lemma \ref{manytosimple}]
In Lemma \ref{ml} let $L$ to be  
the set of probability 
distributions and $R$ to be the set 
of $l$-tuples of $n$-bit strings. Then let  
$\pair{x_1, \ldots, x_l}$ be adjacent to $Q$ if 
$\log(Q(x_1)\dots Q(x_l)) \ge -b$.  
\end{proof}
\begin{proof}[Proof of Lemma \ref{better}]% from Lemma \ref{manytosimple}]
Assume that a tuple $\overrightarrow{x}$
is given. Enumerate all distributions $Q$ such that $C(Q) \le a=C(P)$ and 
$-\log( Q(x_1) \dots Q(x_l) \le b=-\log(P(x_1) \dots P(x_l))$. 
We can retrieve $P$ from $\overrightarrow{x}$ and the ordinal number
of $P$ in this enumerating. Thus the logarithm of that 
number must be greater than 
$C(P|\overrightarrow{x})$ (with logarithmic accuracy). 
By Lemma \ref{manytosimple} for $k = C(P|\overrightarrow{x})$
there is a probability distribution $\tilde P$ in the enumeration
whose complexity is at most $a-k$ (with logarithmic accuracy).
\end{proof}

Now, we are ready to finish the theorem. Consider some distribution $P$. We need to show that there is a distribution $\tilde P$ such that: 
$C(\tilde P) \le C(P) + O(\log(n+l))$ and 
$\delta(\overrightarrow{x},\tilde P) \le d(\overrightarrow{x}|P) + O(\log(n+l))$.
To this end consider the distribution $\tilde{P}$ from Lemma \ref{better}. 
By construction the complexity of $\tilde P$ is at most that of
$P$ (with logarithmic accuracy). And its optimality deficiency
can be bounded as follows:
\begin{align*}
\delta(\overrightarrow{x},\tilde P)&=
C(\tilde P)-\log(\tilde P(x_1)\dots\tilde P(x_l))-C(x_1,\dots,x_l)\\
&\le C(P)-C(P|\overrightarrow{x})-\log(P(x_1)\dots\tilde P(x_l))-C(x_1,\dots,x_l)\\
&=\delta(P,\overrightarrow{x})-C(P|\overrightarrow{x})=d(\overrightarrow{x}|P).
\qed
\end{align*}
\renewcommand{\qed}{}
\end{proof}

\end{document}